
\documentclass[12pt]{colt2014} 


\title[An Inequality with Applications...]{An Inequality with Applications to Structured Sparsity and Multitask
Dictionary Learning}
\newcommand{\EE}{{{\mathbb E}}}
\newcommand{\ext}{{{\rm ext}}}
\newcommand{\tr}{{{\rm tr}}}
\newcommand{\HH}{{\cal H}}
\usepackage{amsfonts}

\typeout{TCILATEX Macros for Scientific Word 5.0 <13 Feb 2003>.}
\typeout{NOTICE:  This macro file is NOT proprietary and may be 
freely copied and distributed.}
\makeatletter

\ifx\pdfoutput\relax\let\pdfoutput=\undefined\fi
\newcount\msipdfoutput
\ifx\pdfoutput\undefined
\else
 \ifcase\pdfoutput
 \else 
    \msipdfoutput=1
    \ifx\paperwidth\undefined
    \else
      \ifdim\paperheight=0pt\relax
      \else
        \pdfpageheight\paperheight
      \fi
      \ifdim\paperwidth=0pt\relax
      \else
        \pdfpagewidth\paperwidth
      \fi
    \fi
  \fi  
\fi

%

%
\newcount\@hour\newcount\@minute\chardef\@x10\chardef\@xv60
\def\tcitime{
\def\@time{%
  \@minute\time\@hour\@minute\divide\@hour\@xv
  \ifnum\@hour<\@x 0\fi\the\@hour:%
  \multiply\@hour\@xv\advance\@minute-\@hour
  \ifnum\@minute<\@x 0\fi\the\@minute
  }}%


\def\x@hyperref#1#2#3{%
   \catcode`\~ = 12
   \catcode`\$ = 12
   \catcode`\_ = 12
   \catcode`\# = 12
   \catcode`\& = 12
   \y@hyperref{#1}{#2}{#3}%
}

\def\y@hyperref#1#2#3#4{%
   #2\ref{#4}#3
   \catcode`\~ = 13
   \catcode`\$ = 3
   \catcode`\_ = 8
   \catcode`\# = 6
   \catcode`\& = 4
}

\@ifundefined{hyperref}{\let\hyperref\x@hyperref}{}
\@ifundefined{msihyperref}{\let\msihyperref\x@hyperref}{}

\@ifundefined{qExtProgCall}{\def\qExtProgCall#1#2#3#4#5#6{\relax}}{}
%
%
%
%
\def\QCTOpt[#1]#2{%
  \def\QCTOptB{#1}
  \def\QCTOptA{#2}
}
\def\QCTNOpt#1{%
  \def\QCTOptA{#1}
  \let\QCTOptB\empty
}
\def\Qct{%
  \@ifnextchar[{%
    \QCTOpt}{\QCTNOpt}
}
\def\QCBOpt[#1]#2{%
  \def\QCBOptB{#1}%
  \def\QCBOptA{#2}%
}
\def\QCBNOpt#1{%
  \def\QCBOptA{#1}%
  \let\QCBOptB\empty
}
\def\Qcb{%
  \@ifnextchar[{%
    \QCBOpt}{\QCBNOpt}%
}
\def\PrepCapArgs{%
  \ifx\QCBOptA\empty
    \ifx\QCTOptA\empty
      {}%
    \else
      \ifx\QCTOptB\empty
        {\QCTOptA}%
      \else
        [\QCTOptB]{\QCTOptA}%
      \fi
    \fi
  \else
    \ifx\QCBOptA\empty
      {}%
    \else
      \ifx\QCBOptB\empty
        {\QCBOptA}%
      \else
        [\QCBOptB]{\QCBOptA}%
      \fi
    \fi
  \fi
}
\newcount\GRAPHICSTYPE
\GRAPHICSTYPE=\z@
\def\GRAPHICSPS#1{%
 \ifcase\GRAPHICSTYPE
   \special{ps: #1}%
 \or
   \special{language "PS", include "#1"}%
 \fi
}%
%
%
%

\def\graffile#1#2#3#4{%
    \bgroup
	   \@inlabelfalse
       \leavevmode
       \@ifundefined{bbl@deactivate}{\def~{\string~}}{\activesoff}%
        \raise -#4 \BOXTHEFRAME{%
           \hbox to #2{\raise #3\hbox to #2{\null #1\hfil}}}%
    \egroup
}%
%
\def\draftbox#1#2#3#4{%
 \leavevmode\raise -#4 \hbox{%
  \frame{\rlap{\protect\tiny #1}\hbox to #2%
   {\vrule height#3 width\z@ depth\z@\hfil}%
  }%
 }%
}%
\newcount\@msidraft
\@msidraft=\z@
\let\nographics=\@msidraft
\newif\ifwasdraft
\wasdraftfalse

\def\GRAPHIC#1#2#3#4#5{%
   \ifnum\@msidraft=\@ne\draftbox{#2}{#3}{#4}{#5}%
   \else\graffile{#1}{#3}{#4}{#5}%
   \fi
}
\def\addtoLaTeXparams#1{%
    \edef\LaTeXparams{\LaTeXparams #1}}%
%

\newif\ifBoxFrame \BoxFramefalse
\newif\ifOverFrame \OverFramefalse
\newif\ifUnderFrame \UnderFramefalse

\def\BOXTHEFRAME#1{%
   \hbox{%
      \ifBoxFrame
         \frame{#1}%
      \else
         {#1}%
      \fi
   }%
}

\def\doFRAMEparams#1{\BoxFramefalse\OverFramefalse\UnderFramefalse\readFRAMEparams#1\end}%
\def\readFRAMEparams#1{%
 \ifx#1\end%
  \let\next=\relax
  \else
  \ifx#1i\dispkind=\z@\fi
  \ifx#1d\dispkind=\@ne\fi
  \ifx#1f\dispkind=\tw@\fi
  \ifx#1t\addtoLaTeXparams{t}\fi
  \ifx#1b\addtoLaTeXparams{b}\fi
  \ifx#1p\addtoLaTeXparams{p}\fi
  \ifx#1h\addtoLaTeXparams{h}\fi
  \ifx#1X\BoxFrametrue\fi
  \ifx#1O\OverFrametrue\fi
  \ifx#1U\UnderFrametrue\fi
  \ifx#1w
    \ifnum\@msidraft=1\wasdrafttrue\else\wasdraftfalse\fi
    \@msidraft=\@ne
  \fi
  \let\next=\readFRAMEparams
  \fi
 \next
 }%
%

\def\IFRAME#1#2#3#4#5#6{%
      \bgroup
      \let\QCTOptA\empty
      \let\QCTOptB\empty
      \let\QCBOptA\empty
      \let\QCBOptB\empty
      #6%
      \parindent=0pt
      \leftskip=0pt
      \rightskip=0pt
      \setbox0=\hbox{\QCBOptA}%
      \@tempdima=#1\relax
      \ifOverFrame
          \typeout{This is not implemented yet}%
          \show\HELP
      \else
         \ifdim\wd0>\@tempdima
            \advance\@tempdima by \@tempdima
            \ifdim\wd0 >\@tempdima
               \setbox1 =\vbox{%
                  \unskip\hbox to \@tempdima{\hfill\GRAPHIC{#5}{#4}{#1}{#2}{#3}\hfill}%
                  \unskip\hbox to \@tempdima{\parbox[b]{\@tempdima}{\QCBOptA}}%
               }%
               \wd1=\@tempdima
            \else
               \textwidth=\wd0
               \setbox1 =\vbox{%
                 \noindent\hbox to \wd0{\hfill\GRAPHIC{#5}{#4}{#1}{#2}{#3}\hfill}\\%
                 \noindent\hbox{\QCBOptA}%
               }%
               \wd1=\wd0
            \fi
         \else
            \ifdim\wd0>0pt
              \hsize=\@tempdima
              \setbox1=\vbox{%
                \unskip\GRAPHIC{#5}{#4}{#1}{#2}{0pt}%
                \break
                \unskip\hbox to \@tempdima{\hfill \QCBOptA\hfill}%
              }%
              \wd1=\@tempdima
           \else
              \hsize=\@tempdima
              \setbox1=\vbox{%
                \unskip\GRAPHIC{#5}{#4}{#1}{#2}{0pt}%
              }%
              \wd1=\@tempdima
           \fi
         \fi
         \@tempdimb=\ht1
         \advance\@tempdimb by -#2
         \advance\@tempdimb by #3
         \leavevmode
         \raise -\@tempdimb \hbox{\box1}%
      \fi
      \egroup%
}%
%
\def\DFRAME#1#2#3#4#5{%
  \vspace\topsep
  \hfil\break
  \bgroup
     \leftskip\@flushglue
	 \rightskip\@flushglue
	 \parindent\z@
	 \parfillskip\z@skip
     \let\QCTOptA\empty
     \let\QCTOptB\empty
     \let\QCBOptA\empty
     \let\QCBOptB\empty
	 \vbox\bgroup
        \ifOverFrame 
           #5\QCTOptA\par
        \fi
        \GRAPHIC{#4}{#3}{#1}{#2}{\z@}%
        \ifUnderFrame 
           \break#5\QCBOptA
        \fi
	 \egroup
  \egroup
  \vspace\topsep
  \break
}%
%
\def\FFRAME#1#2#3#4#5#6#7{%
  \@ifundefined{floatstyle}
    {
     \begin{figure}[#1]%
    }
    {
	 \ifx#1h
      \begin{figure}[H]%
	 \else
      \begin{figure}[#1]%
	 \fi
	}
  \let\QCTOptA\empty
  \let\QCTOptB\empty
  \let\QCBOptA\empty
  \let\QCBOptB\empty
  \ifOverFrame
    #4
    \ifx\QCTOptA\empty
    \else
      \ifx\QCTOptB\empty
        \caption{\QCTOptA}%
      \else
        \caption[\QCTOptB]{\QCTOptA}%
      \fi
    \fi
    \ifUnderFrame\else
      \label{#5}%
    \fi
  \else
    \UnderFrametrue%
  \fi
  \begin{center}\GRAPHIC{#7}{#6}{#2}{#3}{\z@}\end{center}%
  \ifUnderFrame
    #4
    \ifx\QCBOptA\empty
      \caption{}%
    \else
      \ifx\QCBOptB\empty
        \caption{\QCBOptA}%
      \else
        \caption[\QCBOptB]{\QCBOptA}%
      \fi
    \fi
    \label{#5}%
  \fi
  \end{figure}%
 }%
%
%
%
%
%
\newcount\dispkind%

\def\makeactives{
  \catcode`\"=\active
  \catcode`\;=\active
  \catcode`\:=\active
  \catcode`\'=\active
  \catcode`\~=\active
}
\bgroup
   \makeactives
   \gdef\activesoff{%
      \def"{\string"}%
      \def;{\string;}%
      \def:{\string:}%
      \def'{\string'}%
      \def~{\string~}%
    }
\egroup

\def\FRAME#1#2#3#4#5#6#7#8{%
 \bgroup
 \ifnum\@msidraft=\@ne
   \wasdrafttrue
 \else
   \wasdraftfalse%
 \fi
 \def\LaTeXparams{}%
 \dispkind=\z@
 \def\LaTeXparams{}%
 \doFRAMEparams{#1}%
 \ifnum\dispkind=\z@\IFRAME{#2}{#3}{#4}{#7}{#8}{#5}\else
  \ifnum\dispkind=\@ne\DFRAME{#2}{#3}{#7}{#8}{#5}\else
   \ifnum\dispkind=\tw@
    \edef\@tempa{\noexpand\FFRAME{\LaTeXparams}}%
    \@tempa{#2}{#3}{#5}{#6}{#7}{#8}%
    \fi
   \fi
  \fi
  \ifwasdraft\@msidraft=1\else\@msidraft=0\fi{}%
  \egroup
 }%
%

\def\TEXUX#1{"texux"}

%
%
%
%
%
%
%
%
%
%

%
\long\def\QQQ#1#2{%
     \long\expandafter\def\csname#1\endcsname{#2}}%
\@ifundefined{QTP}{\def\QTP#1{}}{}
\@ifundefined{QEXCLUDE}{\def\QEXCLUDE#1{}}{}
\@ifundefined{Qlb}{}{}
\@ifundefined{Qlt}{}{}
\long\def\QQA#1#2{}%
\def\QTR#1#2{{\csname#1\endcsname {#2}}}%
\def\EXPAND#1[#2]#3{}%
\def\NOEXPAND#1[#2]#3{}%
\def\LaTeXparent#1{}%
\def\ChildStyles#1{}%
\def\ChildDefaults#1{}%
\def\QTagDef#1#2#3{}%

\@ifundefined{correctchoice}{}{}
\@ifundefined{HTML}{\def\HTML#1{\relax}}{}
\@ifundefined{TCIIcon}{\def\TCIIcon#1#2#3#4{\relax}}{}
\if@compatibility
  \typeout{Not defining UNICODE  U or CustomNote commands for LaTeX 2.09.}
\else
  \providecommand{\UNICODE}[2][]{\protect\rule{.1in}{.1in}}
  \providecommand{\U}[1]{\protect\rule{.1in}{.1in}}
  
\fi

\@ifundefined{lambdabar}{
      
   }{}

%
\@ifundefined{StyleEditBeginDoc}{}{}
%
\def\QQfnmark#1{\footnotemark}

%
%
\@ifundefined{TCIMAKEINDEX}{}{\makeindex}%
%
\@ifundefined{abstract}{%
 \def\abstract{%
  \if@twocolumn
   \section*{Abstract (Not appropriate in this style!)}%
   \else \small 
   \begin{center}{\bf Abstract\vspace{-.5em}\vspace{\z@}}\end{center}%
   \quotation 
   \fi
  }%
 }{%
 }%
\@ifundefined{endabstract}{\def\endabstract
  {\if@twocolumn\else\endquotation\fi}}{}%
\@ifundefined{maketitle}{\def\maketitle#1{}}{}%
\@ifundefined{affiliation}{\def\affiliation#1{}}{}%
\@ifundefined{proof}{}{}%
\@ifundefined{endproof}{}{}%
\@ifundefined{newfield}{\def\newfield#1#2{}}{}%
\@ifundefined{chapter}{\def\chapter#1{\par(Chapter head:)#1\par }%
 \newcount\c@chapter}{}%
\@ifundefined{part}{\def\part#1{\par(Part head:)#1\par }}{}%
\@ifundefined{section}{\def\section#1{\par(Section head:)#1\par }}{}%
\@ifundefined{subsection}{\def\subsection#1%
 {\par(Subsection head:)#1\par }}{}%
\@ifundefined{subsubsection}{\def\subsubsection#1%
 {\par(Subsubsection head:)#1\par }}{}%
\@ifundefined{paragraph}{\def\paragraph#1%
 {\par(Subsubsubsection head:)#1\par }}{}%
\@ifundefined{subparagraph}{\def\subparagraph#1%
 {\par(Subsubsubsubsection head:)#1\par }}{}%
\@ifundefined{therefore}{}{}%
\@ifundefined{backepsilon}{}{}%
\@ifundefined{yen}{}{}%
\@ifundefined{registered}{%
   \def\registered{\relax\ifmmode{}\r@gistered
                    \else$\m@th\r@gistered$\fi}%
 \def\r@gistered{^{\ooalign
  {\hfil\raise.07ex\hbox{$\scriptstyle\rm\text{R}$}\hfil\crcr
  \mathhexbox20D}}}}{}%
\@ifundefined{Eth}{}{}%
\@ifundefined{eth}{}{}%
\@ifundefined{Thorn}{}{}%
\@ifundefined{thorn}{}{}%
%
\@ifundefined{degree}{}{}%
%
\newdimen\theight
\@ifundefined{Column}{\def\Column{%
 \vadjust{\setbox\z@=\hbox{\scriptsize\quad\quad tcol}%
  \theight=\ht\z@\advance\theight by \dp\z@\advance\theight by \lineskip
  \kern -\theight \vbox to \theight{%
   \rightline{\rlap{\box\z@}}%
   \vss
   }%
  }%
 }}{}%
\@ifundefined{qed}{\def\qed{%
 \ifhmode\unskip\nobreak\fi\ifmmode\ifinner\else\hskip5\p@\fi\fi
 \hbox{\hskip5\p@\vrule width4\p@ height6\p@ depth1.5\p@\hskip\p@}%
 }}{}%
\@ifundefined{cents}{}{}%
\@ifundefined{tciLaplace}{}{}%
\@ifundefined{tciFourier}{\def\tciFourier{\ensuremath{\mathcal{F}}}}{}%
\@ifundefined{textcurrency}{}{}%
\@ifundefined{texteuro}{}{}%
\@ifundefined{euro}{}{}%
\@ifundefined{textfranc}{}{}%
\@ifundefined{textlira}{}{}%
\@ifundefined{textpeseta}{}{}%
\@ifundefined{miss}{\def\miss{\hbox{\vrule height2\p@ width 2\p@ depth\z@}}}{}%
\@ifundefined{vvert}{}{}
\@ifundefined{tcol}{\def\tcol#1{{\baselineskip=6\p@ \vcenter{#1}} \Column}}{}%
\@ifundefined{dB}{}{}
\@ifundefined{mB}{}{}
\@ifundefined{nB}{}{}
\@ifundefined{note}{}{}%
\def\newfmtname{LaTeX2e}
%
\ifx\fmtname\newfmtname
  \DeclareOldFontCommand{\rm}{\normalfont\rmfamily}{\mathrm}
  \DeclareOldFontCommand{\sf}{\normalfont\sffamily}{\mathsf}
  \DeclareOldFontCommand{\tt}{\normalfont\ttfamily}{\mathtt}
  \DeclareOldFontCommand{\bf}{\normalfont\bfseries}{\mathbf}
  \DeclareOldFontCommand{\it}{\normalfont\itshape}{\mathit}
  \DeclareOldFontCommand{\sl}{\normalfont\slshape}{\@nomath\sl}
  \DeclareOldFontCommand{\sc}{\normalfont\scshape}{\@nomath\sc}
\fi

%

\def\alpha{{\Greekmath 010B}}%
\def\beta{{\Greekmath 010C}}%
\def\gamma{{\Greekmath 010D}}%
\def\delta{{\Greekmath 010E}}%
\def\epsilon{{\Greekmath 010F}}%
\def\zeta{{\Greekmath 0110}}%
\def\eta{{\Greekmath 0111}}%
\def\theta{{\Greekmath 0112}}%
\def\iota{{\Greekmath 0113}}%
\def\kappa{{\Greekmath 0114}}%
\def\lambda{{\Greekmath 0115}}%
\def\mu{{\Greekmath 0116}}%
\def\nu{{\Greekmath 0117}}%
\def\xi{{\Greekmath 0118}}%
\def\pi{{\Greekmath 0119}}%
\def\rho{{\Greekmath 011A}}%
\def\sigma{{\Greekmath 011B}}%
\def\tau{{\Greekmath 011C}}%
\def\upsilon{{\Greekmath 011D}}%
\def\phi{{\Greekmath 011E}}%
\def\chi{{\Greekmath 011F}}%
\def\psi{{\Greekmath 0120}}%
\def\omega{{\Greekmath 0121}}%
\def\varepsilon{{\Greekmath 0122}}%
\def\vartheta{{\Greekmath 0123}}%
\def\varpi{{\Greekmath 0124}}%
\def\varrho{{\Greekmath 0125}}%
\def\varsigma{{\Greekmath 0126}}%
\def\varphi{{\Greekmath 0127}}%

\def\nabla{{\Greekmath 0272}}
\def\FindBoldGroup{%
   {\setbox0=\hbox{$\mathbf{x\global\edef\theboldgroup{\the\mathgroup}}$}}%
}

\def\Greekmath#1#2#3#4{%
    \if@compatibility
        \ifnum\mathgroup=\symbold
           \mathchoice{\mbox{\boldmath$\displaystyle\mathchar"#1#2#3#4$}}%
                      {\mbox{\boldmath$\textstyle\mathchar"#1#2#3#4$}}%
                      {\mbox{\boldmath$\scriptstyle\mathchar"#1#2#3#4$}}%
                      {\mbox{\boldmath$\scriptscriptstyle\mathchar"#1#2#3#4$}}%
        \else
           \mathchar"#1#2#3#4%
        \fi 
    \else 
        \FindBoldGroup
        \ifnum\mathgroup=\theboldgroup 
           \mathchoice{\mbox{\boldmath$\displaystyle\mathchar"#1#2#3#4$}}%
                      {\mbox{\boldmath$\textstyle\mathchar"#1#2#3#4$}}%
                      {\mbox{\boldmath$\scriptstyle\mathchar"#1#2#3#4$}}%
                      {\mbox{\boldmath$\scriptscriptstyle\mathchar"#1#2#3#4$}}%
        \else
           \mathchar"#1#2#3#4%
        \fi     	    
	  \fi}

\newif\ifGreekBold  \GreekBoldfalse
\let\SAVEPBF=\pbf
\def\pbf{\GreekBoldtrue\SAVEPBF}%

\@ifundefined{theorem}{\newtheorem{theorem}{Theorem}}{}
\@ifundefined{lemma}{\newtheorem{lemma}[theorem]{Lemma}}{}
\@ifundefined{corollary}{\newtheorem{corollary}[theorem]{Corollary}}{}
\@ifundefined{conjecture}{}{}
\@ifundefined{proposition}{}{}
\@ifundefined{axiom}{}{}
\@ifundefined{remark}{}{}
\@ifundefined{example}{}{}
\@ifundefined{exercise}{}{}
\@ifundefined{definition}{}{}

\@ifundefined{mathletters}{%
  \newcounter{equationnumber}  
  \def\mathletters{%
     \addtocounter{equation}{1}
     \edef\@currentlabel{\theequation}%
     \setcounter{equationnumber}{\c@equation}
     \setcounter{equation}{0}%
     \edef\theequation{\@currentlabel\noexpand\alph{equation}}%
  }
  
}{}

\@ifundefined{BibTeX}{%
    \def\BibTeX{{\rm B\kern-.05em{\sc i\kern-.025em b}\kern-.08em
                 T\kern-.1667em\lower.7ex\hbox{E}\kern-.125emX}}}{}%
\@ifundefined{AmS}%
    {\def\AmS{{\protect\usefont{OMS}{cmsy}{m}{n}%
                A\kern-.1667em\lower.5ex\hbox{M}\kern-.125emS}}}{}%
\@ifundefined{AmSTeX}{}{}%
%

\def\@@eqncr{\let\@tempa\relax
    \ifcase\@eqcnt \def\@tempa{& & &}\or \def\@tempa{& &}%
      \else \def\@tempa{&}\fi
     \@tempa
     \if@eqnsw
        \iftag@
           \@taggnum
        \else
           \@eqnnum\stepcounter{equation}%
        \fi
     \fi
     \global\tag@false
     \global\@eqnswtrue
     \global\@eqcnt\z@\cr}

\def\TCItag{\@ifnextchar*{\@TCItagstar}{\@TCItag}}
\def\@TCItag#1{%
    \global\tag@true
    \global\def\@taggnum{(#1)}}
\def\@TCItagstar*#1{%
    \global\tag@true
    \global\def\@taggnum{#1}}
%
%
%
%
%
%
%
%
%
%
%
%
%
%
%
%
%
%
%
%
%
%
%
%
%
%
%
%
%
%
%
%
%
%
%
%
%
%
%
%
%
%
%
%
%
%
%
%
%
%
%
%
%
%
%
%
%
%

\if@compatibility\else
  \RequirePackage{amsmath}
\fi

\def\ExitTCILatex{\makeatother }

\bgroup
\ifx\ds@amstex\relax
   \message{amstex already loaded}\aftergroup\ExitTCILatex
\else
   \@ifpackageloaded{amsmath}%
      {\if@compatibility\message{amsmath already loaded}\fi\aftergroup\ExitTCILatex}
      {}
   \@ifpackageloaded{amstex}%
      {\if@compatibility\message{amstex already loaded}\fi\aftergroup\ExitTCILatex}
      {}
   \@ifpackageloaded{amsgen}%
      {\if@compatibility\message{amsgen already loaded}\fi\aftergroup\ExitTCILatex}
      {}
\fi
\egroup


\typeout{TCILATEX defining AMS-like constructs in LaTeX 2.09 COMPATIBILITY MODE}
%
%
\let\DOTSI\relax
\def\RIfM@{\relax\ifmmode}%
\def\FN@{\futurelet\next}%
\newcount\intno@
\def\iint{\DOTSI\intno@\tw@\FN@\ints@}%
\def\iiint{\DOTSI\intno@\thr@@\FN@\ints@}%
\def\iiiint{\DOTSI\intno@4 \FN@\ints@}%
\def\idotsint{\DOTSI\intno@\z@\FN@\ints@}%
\def\ints@{\findlimits@\ints@@}%
\newif\iflimtoken@
\newif\iflimits@
\def\findlimits@{\limtoken@true\ifx\next\limits\limits@true
 \else\ifx\next\nolimits\limits@false\else
 \limtoken@false\ifx\ilimits@\nolimits\limits@false\else
 \ifinner\limits@false\else\limits@true\fi\fi\fi\fi}%
\def\multint@{\int\ifnum\intno@=\z@\intdots@                          
 \else\intkern@\fi                                                    
 \ifnum\intno@>\tw@\int\intkern@\fi                                   
 \ifnum\intno@>\thr@@\int\intkern@\fi                                 
 \int}
\def\multintlimits@{\intop\ifnum\intno@=\z@\intdots@\else\intkern@\fi
 \ifnum\intno@>\tw@\intop\intkern@\fi
 \ifnum\intno@>\thr@@\intop\intkern@\fi\intop}%
\def\intic@{%
    \mathchoice{\hskip.5em}{\hskip.4em}{\hskip.4em}{\hskip.4em}}%
\def\negintic@{\mathchoice
 {\hskip-.5em}{\hskip-.4em}{\hskip-.4em}{\hskip-.4em}}%
\def\ints@@{\iflimtoken@                                              
 \def\ints@@@{\iflimits@\negintic@
   \mathop{\intic@\multintlimits@}\limits                             
  \else\multint@\nolimits\fi                                          
  \eat@}
 \else                                                                
 \def\ints@@@{\iflimits@\negintic@
  \mathop{\intic@\multintlimits@}\limits\else
  \multint@\nolimits\fi}\fi\ints@@@}%
\def\intkern@{\mathchoice{\!\!\!}{\!\!}{\!\!}{\!\!}}%
\def\plaincdots@{\mathinner{\cdotp\cdotp\cdotp}}%
\def\intdots@{\mathchoice{\plaincdots@}%
 {{\cdotp}\mkern1.5mu{\cdotp}\mkern1.5mu{\cdotp}}%
 {{\cdotp}\mkern1mu{\cdotp}\mkern1mu{\cdotp}}%
 {{\cdotp}\mkern1mu{\cdotp}\mkern1mu{\cdotp}}}%
%
%
%
\def\RIfM@{\relax\protect\ifmmode}
\def\text{\RIfM@\expandafter\text@\else\expandafter\mbox\fi}
\let\nfss@text\text
\def\text@#1{\mathchoice
   {\textdef@\displaystyle\f@size{#1}}%
   {\textdef@\textstyle\tf@size{\firstchoice@false #1}}%
   {\textdef@\textstyle\sf@size{\firstchoice@false #1}}%
   {\textdef@\textstyle \ssf@size{\firstchoice@false #1}}%
   \glb@settings}

\def\textdef@#1#2#3{\hbox{{%
                    \everymath{#1}%
                    \let\f@size#2\selectfont
                    #3}}}
\newif\iffirstchoice@
\firstchoice@true
%
%
\def\Let@{\relax\iffalse{\fi\let\\=\cr\iffalse}\fi}%
\def\vspace@{\def\vspace##1{\crcr\noalign{\vskip##1\relax}}}%
\def\multilimits@{\bgroup\vspace@\Let@
 \baselineskip\fontdimen10 \scriptfont\tw@
 \advance\baselineskip\fontdimen12 \scriptfont\tw@
 \lineskip\thr@@\fontdimen8 \scriptfont\thr@@
 \lineskiplimit\lineskip
 \vbox\bgroup\ialign\bgroup\hfil$\m@th\scriptstyle{##}$\hfil\crcr}%
\def\Sb{_\multilimits@}%
\def\endSb{\crcr\egroup\egroup\egroup}%
\def\Sp{^\multilimits@}%

%
%
%
\newdimen\ex@
\ex@.2326ex
\def\rightarrowfill@#1{$#1\m@th\mathord-\mkern-6mu\cleaders
 \hbox{$#1\mkern-2mu\mathord-\mkern-2mu$}\hfill
 \mkern-6mu\mathord\rightarrow$}%
\def\leftarrowfill@#1{$#1\m@th\mathord\leftarrow\mkern-6mu\cleaders
 \hbox{$#1\mkern-2mu\mathord-\mkern-2mu$}\hfill\mkern-6mu\mathord-$}%
\def\leftrightarrowfill@#1{$#1\m@th\mathord\leftarrow
\mkern-6mu\cleaders
 \hbox{$#1\mkern-2mu\mathord-\mkern-2mu$}\hfill
 \mkern-6mu\mathord\rightarrow$}%
\def\overrightarrow{\mathpalette\overrightarrow@}%
\def\overrightarrow@#1#2{\vbox{\ialign{##\crcr\rightarrowfill@#1\crcr
 \noalign{\kern-\ex@\nointerlineskip}$\m@th\hfil#1#2\hfil$\crcr}}}%

\def\overleftarrow{\mathpalette\overleftarrow@}%
\def\overleftarrow@#1#2{\vbox{\ialign{##\crcr\leftarrowfill@#1\crcr
 \noalign{\kern-\ex@\nointerlineskip}$\m@th\hfil#1#2\hfil$\crcr}}}%
\def\overleftrightarrow{\mathpalette\overleftrightarrow@}%
\def\overleftrightarrow@#1#2{\vbox{\ialign{##\crcr
   \leftrightarrowfill@#1\crcr
 \noalign{\kern-\ex@\nointerlineskip}$\m@th\hfil#1#2\hfil$\crcr}}}%
\def\underrightarrow{\mathpalette\underrightarrow@}%
\def\underrightarrow@#1#2{\vtop{\ialign{##\crcr$\m@th\hfil#1#2\hfil
  $\crcr\noalign{\nointerlineskip}\rightarrowfill@#1\crcr}}}%

\def\underleftarrow{\mathpalette\underleftarrow@}%
\def\underleftarrow@#1#2{\vtop{\ialign{##\crcr$\m@th\hfil#1#2\hfil
  $\crcr\noalign{\nointerlineskip}\leftarrowfill@#1\crcr}}}%
\def\underleftrightarrow{\mathpalette\underleftrightarrow@}%
\def\underleftrightarrow@#1#2{\vtop{\ialign{##\crcr$\m@th
  \hfil#1#2\hfil$\crcr
 \noalign{\nointerlineskip}\leftrightarrowfill@#1\crcr}}}%

\def\qopnamewl@#1{\mathop{\operator@font#1}\nlimits@}
\let\nlimits@\displaylimits
\def\setboxz@h{\setbox\z@\hbox}

\def\varlim@#1#2{\mathop{\vtop{\ialign{##\crcr
 \hfil$#1\m@th\operator@font lim$\hfil\crcr
 \noalign{\nointerlineskip}#2#1\crcr
 \noalign{\nointerlineskip\kern-\ex@}\crcr}}}}

 \def\rightarrowfill@#1{\m@th\setboxz@h{$#1-$}\ht\z@\z@
  $#1\copy\z@\mkern-6mu\cleaders
  \hbox{$#1\mkern-2mu\box\z@\mkern-2mu$}\hfill
  \mkern-6mu\mathord\rightarrow$}
\def\leftarrowfill@#1{\m@th\setboxz@h{$#1-$}\ht\z@\z@
  $#1\mathord\leftarrow\mkern-6mu\cleaders
  \hbox{$#1\mkern-2mu\copy\z@\mkern-2mu$}\hfill
  \mkern-6mu\box\z@$}

\def\projlim{\qopnamewl@{proj\,lim}}
\def\injlim{\qopnamewl@{inj\,lim}}
\def\varinjlim{\mathpalette\varlim@\rightarrowfill@}
\def\varprojlim{\mathpalette\varlim@\leftarrowfill@}
\def\varliminf{\mathpalette\varliminf@{}}
\def\varliminf@#1{\mathop{\underline{\vrule\@depth.2\ex@\@width\z@
   \hbox{$#1\m@th\operator@font lim$}}}}
\def\varlimsup{\mathpalette\varlimsup@{}}
\def\varlimsup@#1{\mathop{\overline
  {\hbox{$#1\m@th\operator@font lim$}}}}

%
%
%
%
%
%
\begingroup \catcode `|=0 \catcode `[= 1
\catcode`]=2 \catcode `\{=12 \catcode `\}=12
\catcode`\\=12 
|gdef|@alignverbatim#1\end{align}[#1|end[align]]
|gdef|@salignverbatim#1\end{align*}[#1|end[align*]]

|gdef|@alignatverbatim#1\end{alignat}[#1|end[alignat]]
|gdef|@salignatverbatim#1\end{alignat*}[#1|end[alignat*]]

|gdef|@xalignatverbatim#1\end{xalignat}[#1|end[xalignat]]
|gdef|@sxalignatverbatim#1\end{xalignat*}[#1|end[xalignat*]]

|gdef|@gatherverbatim#1\end{gather}[#1|end[gather]]
|gdef|@sgatherverbatim#1\end{gather*}[#1|end[gather*]]

|gdef|@gatherverbatim#1\end{gather}[#1|end[gather]]
|gdef|@sgatherverbatim#1\end{gather*}[#1|end[gather*]]

|gdef|@multilineverbatim#1\end{multiline}[#1|end[multiline]]
|gdef|@smultilineverbatim#1\end{multiline*}[#1|end[multiline*]]

|gdef|@arraxverbatim#1\end{arrax}[#1|end[arrax]]
|gdef|@sarraxverbatim#1\end{arrax*}[#1|end[arrax*]]

|gdef|@tabulaxverbatim#1\end{tabulax}[#1|end[tabulax]]
|gdef|@stabulaxverbatim#1\end{tabulax*}[#1|end[tabulax*]]

|endgroup

\def\align{\@verbatim \frenchspacing\@vobeyspaces \@alignverbatim
You are using the "align" environment in a style in which it is not defined.}

\@namedef{align*}{\@verbatim\@salignverbatim
You are using the "align*" environment in a style in which it is not defined.}
\expandafter\let\csname endalign*\endcsname =\endtrivlist

\def\alignat{\@verbatim \frenchspacing\@vobeyspaces \@alignatverbatim
You are using the "alignat" environment in a style in which it is not defined.}

\@namedef{alignat*}{\@verbatim\@salignatverbatim
You are using the "alignat*" environment in a style in which it is not defined.}
\expandafter\let\csname endalignat*\endcsname =\endtrivlist

\def\xalignat{\@verbatim \frenchspacing\@vobeyspaces \@xalignatverbatim
You are using the "xalignat" environment in a style in which it is not defined.}

\@namedef{xalignat*}{\@verbatim\@sxalignatverbatim
You are using the "xalignat*" environment in a style in which it is not defined.}
\expandafter\let\csname endxalignat*\endcsname =\endtrivlist

\def\gather{\@verbatim \frenchspacing\@vobeyspaces \@gatherverbatim
You are using the "gather" environment in a style in which it is not defined.}

\@namedef{gather*}{\@verbatim\@sgatherverbatim
You are using the "gather*" environment in a style in which it is not defined.}
\expandafter\let\csname endgather*\endcsname =\endtrivlist

\def\multiline{\@verbatim \frenchspacing\@vobeyspaces \@multilineverbatim
You are using the "multiline" environment in a style in which it is not defined.}

\@namedef{multiline*}{\@verbatim\@smultilineverbatim
You are using the "multiline*" environment in a style in which it is not defined.}
\expandafter\let\csname endmultiline*\endcsname =\endtrivlist

\def\arrax{\@verbatim \frenchspacing\@vobeyspaces \@arraxverbatim
You are using a type of "array" construct that is only allowed in AmS-LaTeX.}

\def\tabulax{\@verbatim \frenchspacing\@vobeyspaces \@tabulaxverbatim
You are using a type of "tabular" construct that is only allowed in AmS-LaTeX.}

\@namedef{arrax*}{\@verbatim\@sarraxverbatim
You are using a type of "array*" construct that is only allowed in AmS-LaTeX.}
\expandafter\let\csname endarrax*\endcsname =\endtrivlist

\@namedef{tabulax*}{\@verbatim\@stabulaxverbatim
You are using a type of "tabular*" construct that is only allowed in AmS-LaTeX.}
\expandafter\let\csname endtabulax*\endcsname =\endtrivlist


 \def\endequation{%
     \ifmmode\ifinner 
      \iftag@
        \addtocounter{equation}{-1} 
        $\hfil
           \displaywidth\linewidth\@taggnum\egroup \endtrivlist
        \global\tag@false
        \global\@ignoretrue   
      \else
        $\hfil
           \displaywidth\linewidth\@eqnnum\egroup \endtrivlist
        \global\tag@false
        \global\@ignoretrue 
      \fi
     \else   
      \iftag@
        \addtocounter{equation}{-1} 
        \eqno \hbox{\@taggnum}
        \global\tag@false%
        $$\global\@ignoretrue
      \else
        \eqno \hbox{\@eqnnum}
        $$\global\@ignoretrue
      \fi
     \fi\fi
 } 

 \newif\iftag@ \tag@false
 
 \def\TCItag{\@ifnextchar*{\@TCItagstar}{\@TCItag}}
 \def\@TCItag#1{%
     \global\tag@true
     \global\def\@taggnum{(#1)}}
 \def\@TCItagstar*#1{%
     \global\tag@true
     \global\def\@taggnum{#1}}

  \@ifundefined{tag}{
     \def\tag{\@ifnextchar*{\@tagstar}{\@tag}}
     \def\@tag#1{%
         \global\tag@true
         \global\def\@taggnum{(#1)}}
     \def\@tagstar*#1{%
         \global\tag@true
         \global\def\@taggnum{#1}}
  }{}

%
%
%
%
%

\makeatother

 \coltauthor{\Name{Andreas Maurer} \Email{am@andreas-maurer.eu}\\
 \addr Adalbertstrasse 55, D-80799 Munchen, Germany
 \AND
 \Name{Massimiliano Pontil} \Email{m.pontil@cs.ucl.ac.uk}\\
 \addr Department of Computer Science \\
 Centre for Computational Statistics and Machine Learning \\
 University College London, UK
  \AND
 \Name{Bernardino Romera-Paredes} \Email{bernardino.paredes.09@ucl.ac.uk}\\
 \addr Department of Computer Science and UCL Interactive Centre\\ University College London, UK
 }

\begin{document}

\maketitle

\begin{abstract}
From concentration inequalities for the suprema of Gaussian or Rademacher
processes an inequality is derived. It is applied to sharpen existing and to
derive novel bounds on the empirical Rademacher complexities of unit balls
in various norms appearing in the context of structured sparsity and
multitask dictionary learning or matrix factorization. A key role is played
by the largest eigenvalue of the data covariance matrix.
\end{abstract}

\begin{keywords}
Concentration inequalities, multitask learning, Rademacher complexity, risk bounds, structured sparsity.
\end{keywords}

\section{Introduction}
The method of Rademacher complexities \citep{Bartlett 2002,Koltchinskii
2002} has become a standard tool to prove generalization guarantees for
learning algorithms. One considers a loss class $\tciFourier $ of functions $%
f:\mathcal{X}\rightarrow 
\mathbb{R}
$, where $\mathcal{X}$ is some space of examples (such as input-output
pairs), a sample $\mathbf{x}=\left( x_{1},\dots,x_{n}\right) \in \mathcal{X}%
^{n}$ of observations and a vector $\mathbf{\epsilon }=\left( \epsilon
_{1},\dots,\epsilon _{n}\right) $ of independent Rademacher variables $%
\epsilon _{i}$ uniformly distributed on $\left\{ -1,1\right\} $. The
Rademacher complexity $\mathcal{R}( \tciFourier ,\mathbf{x}%
) $ is then defined as 
\begin{equation}
\mathcal{R}\left( \tciFourier ,\mathbf{x}\right) =\frac{2}{n}\mathbb{E}%
\sup_{f\in \tciFourier }\sum_{i=1}^{n}\epsilon _{i}f\left( x_{i}\right) .
\label{Definition Rademacher average}
\end{equation}%
Bounds on Rademacher complexities are useful in learning theory because they
lead to uniform bounds, as for example in the following result \citep
{Bartlett 2002}.

\begin{theorem}
\label{Theorem Rademacher bound}Suppose the members of $\tciFourier $ take
values in $\left[ 0,1\right]$, let $X,X_{1},\dots,X_{n}$ be iid random
variables with values in $\mathcal{X}$, and let $\mathbf{X}=(
X_{1},\dots,X_{n}) $. Then for $\delta >0$ with probability at least $%
1-\delta $ we have for every $f\in \tciFourier $ that%
\begin{equation*}
\mathbb{E}f\left( X\right) \leq \frac{1}{n}\sum_{i=1}^n f\left( X_{i}\right) +%
\mathcal{R}\left( \tciFourier ,\mathbf{X}\right) +\sqrt{\frac{9\ln 2/\delta 
}{2n}}.
\end{equation*}
\end{theorem}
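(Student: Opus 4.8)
The plan is to combine the classical symmetrization argument with two applications of a bounded-differences concentration inequality. Fix a sample $\mathbf{x}=(x_1,\dots,x_n)\in\mathcal{X}^n$ and set
\begin{equation*}
\Phi(\mathbf{x})=\sup_{f\in\tciFourier}\Big(\mathbb{E}f(X)-\frac{1}{n}\sum_{i=1}^{n}f(x_i)\Big),
\end{equation*}
so that proving the theorem amounts to controlling $\Phi(\mathbf{X})$, since for each individual $f$ one has $\mathbb{E}f(X)-\frac1n\sum_i f(X_i)\le\Phi(\mathbf{X})$. Because every $f\in\tciFourier$ takes values in $[0,1]$, replacing a single coordinate of $\mathbf{x}$ changes $\frac1n\sum_i f(x_i)$ by at most $1/n$, uniformly in $f$, so $\Phi$ has bounded differences $1/n$. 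By McDiarmid's inequality, with probability at least $1-\delta/2$,
\begin{equation*}
\Phi(\mathbf{X})\le\mathbb{E}\,\Phi(\mathbf{X})+\sqrt{\frac{\ln(2/\delta)}{2n}}.
\end{equation*}

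The next step is to bound $\mathbb{E}\,\Phi(\mathbf{X})$ by the expected Rademacher complexity. Introducing an independent ghost sample $\mathbf{X}'=(X_1',\dots,X_n')$ with the same law, one writes $\mathbb{E}f(X)=\mathbb{E}_{\mathbf{X}'}\frac1n\sum_i f(X_i')$, moves this expectation outside the supremum by Jensen's inequality, and then inserts Rademacher signs $\epsilon_i$ using that each pair $(X_i,X_i')$ is exchangeable; a triangle-inequality split of the resulting symmetrized sum yields $\mathbb{E}\,\Phi(\mathbf{X})\le\frac{2}{n}\,\mathbb{E}\sup_{f\in\tciFourier}\sum_{i=1}^n\epsilon_i f(X_i)=\mathbb{E}\,\mathcal{R}(\tciFourier,\mathbf{X})$.

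It remains to replace the expected Rademacher complexity by the empirical one that appears in the statement. The map $\mathbf{x}\mapsto\mathcal{R}(\tciFourier,\mathbf{x})$ also has bounded differences: changing one coordinate changes $\epsilon_i f(x_i)$ by at most $1$ uniformly in $f$ and $\epsilon_i$, hence changes $\mathcal{R}$ by at most $2/n$. A second application of McDiarmid gives, with probability at least $1-\delta/2$, $\mathbb{E}\,\mathcal{R}(\tciFourier,\mathbf{X})\le\mathcal{R}(\tciFourier,\mathbf{X})+\sqrt{2\ln(2/\delta)/n}$. A union bound over the two exceptional events, followed by adding the two deviation terms via $\sqrt{1/(2n)}+\sqrt{2/n}=3/\sqrt{2n}$, gives exactly $\sqrt{9\ln(2/\delta)/(2n)}$ and completes the proof.

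Most of this is standard bookkeeping; the step that needs genuine care is the symmetrization---justifying the exchange of supremum and ghost-sample expectation and the insertion of the $\epsilon_i$ without incurring an extra constant---and, secondarily, tracking the two bounded-difference constants ($1/n$ and $2/n$) so that the deviations combine to precisely $9/(2n)$ under the square root rather than something looser.
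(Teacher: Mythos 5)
Your proof is correct: the symmetrization step and the two applications of the bounded-difference inequality (with constants $1/n$ for the supremum of the centered empirical process and $2/n$ for the empirical Rademacher complexity) are all valid, and the two deviation terms combine exactly to $\sqrt{9\ln(2/\delta)/(2n)}$ after the union bound. The paper does not prove this theorem but cites Bartlett and Mendelson (2002), and your argument is precisely the standard proof underlying that citation, so it matches the intended route.
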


Since also for any real $L$-Lipschitz function $\phi $ we have $\mathcal{R}%
\left( \phi \circ \tciFourier ,\mathbf{x}\right) \leq L~\mathcal{R}\left(
\tciFourier ,\mathbf{x}\right) $ \citep[see e.g.][]{Bartlett 2002} the utility of
Rademacher complexities is not limited to functions with values in $\left[
0,1\right] $.

For many function classes $\tciFourier $ considered in machine learning one
can find other function classes $\tciFourier _{1},\dots,\tciFourier _{M}$ such
that 
\begin{equation}
\sup_{f\in \tciFourier }\sum_{i=1}^n\epsilon _{i}f\left( x_{i}\right) \leq
\max_{m=1}^M\sup_{f\in \tciFourier _{m}}\sum_{i=1}^n\epsilon _{i}f\left(
x_{i}\right) .  \label{Decomposition Property}
\end{equation}%
Multiple kernel learning \citep[see e.g.][]{bach,Cortes 2010,Yiming}
provides an example. Let $\HH=\HH_{1}\oplus \cdots \oplus \HH_{M}$ be the direct sum
of Hilbert spaces $\HH_{m}$ with norm $\left\Vert \cdot\right\Vert$ and inner
product $\left\langle \cdot,\cdot\right\rangle$. The $\HH_{m}$ are reproducing kernel
Hilbert spaces induced by kernels $\kappa _{m}$ with
corresponding feature maps $\psi _{m}:\mathcal{X}\rightarrow
\HH_{m}$. We denote by $\psi =\left( \psi _{1},\dots,\psi
_{M}\right) :\mathcal{X}\rightarrow \HH$ the composite feature map and define the group norm for $\beta
=\left( \beta _{1},\dots,\beta _{M}\right) \in \HH$ by%
\begin{equation*}
\left\Vert \beta \right\Vert _{G}=\sum_{m=1}^{M}\left\Vert \beta
_{m}\right\Vert .
\end{equation*}%
We are interested in the class of functions $\mathcal{\tciFourier }=\left\{
x\in \mathcal{X}\mapsto \left\langle \psi \left( x\right) ,\beta
\right\rangle :\left\Vert \beta \right\Vert _{G}\leq 1\right\} $. It is easy
to see that the dual norm to $\left\Vert \cdot\right\Vert _{G}$ is $\left\Vert
z\right\Vert _{G,\ast }=\max_{m}\left\Vert z_{m}\right\Vert $. We therefore
have, writing $\tciFourier _{m}=\left\{ x\mapsto \left\langle \psi
_{m}\left( x\right) ,\beta \right\rangle _{m}:\beta \in \HH_{m}\text{, }%
\left\Vert \beta \right\Vert \leq 1\right\} $,%
\begin{equation*}
\sup_{f\in \tciFourier }\sum_{i=1}^n\epsilon _{i}f\left( x_{i}\right)
=\left\Vert \sum_{i=1}^n\epsilon _{i}\psi \left( x_{i}\right) \right\Vert
_{G,\ast }=\max_{m=1}^M\sup_{f\in \tciFourier _{m}}\sum_{i=1}^n \epsilon _{i}f\left(
x_{i}\right) ,
\end{equation*}%
as in (\ref{Decomposition Property}), so $\mathcal{R}\left( \tciFourier ,%
\mathbf{x}\right) \leq \mathcal{R}\left( \cup _{m}\tciFourier _{m},\mathbf{x}%
\right) $. In the sequel we show that many classes encountered in the study
of structured sparsity, matrix factorization and multitask dictionary
learning allow similar decompositions. 

This paper proposes a simple general method to obtain uniform bounds for
these cases and applies it to sharpen some existing ones, and to derive some new
results. The method is based on the following innocuous looking lemma.

\begin{lemma}
\label{Lemma Main}Let $M\geq 4$ and $A_{1},\dots,A_{M}\subset 
\mathbb{R}
^{n}$, $A=\cup _{m}A_{m}$, and let $\mathbf{\epsilon }=\left( \epsilon
_{1},\dots,\epsilon _{n}\right) $ be a vector of independent Rademacher
variables. Then%
\begin{equation*}
\mathbb{E}\sup_{\mathbf{z}\in A}~\left\langle \mathbf{\epsilon },\mathbf{z}%
\right\rangle \leq \max_{m=1}^M\mathbb{E}\sup_{\mathbf{z}\in A_{m}}\left\langle 
\mathbf{\epsilon },\mathbf{z}\right\rangle +4\sup_{\mathbf{z}\in
A}\left\Vert \mathbf{z}\right\Vert \sqrt{\ln M}.
\end{equation*}%
If the $\epsilon _{i}$ are replaced by standard normal variables the
same conclusion holds with the constant $4$ replaced by $2$.
\end{lemma}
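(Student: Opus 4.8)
The plan is to reduce the statement to a union bound together with a standard maximal inequality for subgaussian random variables, the only real subtlety being the bookkeeping of constants. For $t\in\mathbb{R}^n$ set $f_m(t)=\sup_{\mathbf{z}\in A_m}\langle t,\mathbf{z}\rangle$ and $F=\max_{m}f_m$, so that $F(\mathbf{\epsilon})=\sup_{\mathbf{z}\in A}\langle\mathbf{\epsilon},\mathbf{z}\rangle$; write $\sigma=\sup_{\mathbf{z}\in A}\|\mathbf{z}\|$ and $\mu=\max_m\mathbb{E}f_m(\mathbf{\epsilon})$. The first step is the elementary observation that each $f_m$ is convex, being a supremum of linear functions, and $\sigma$-Lipschitz with respect to the Euclidean norm, since $f_m(t)-f_m(s)\le\sup_{\mathbf{z}\in A_m}\langle t-s,\mathbf{z}\rangle\le\|t-s\|\sup_{\mathbf{z}\in A_m}\|\mathbf{z}\|\le\sigma\|t-s\|$.

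The second step invokes a concentration inequality for the supremum of the process. For the Rademacher case I would use the concentration inequality for a convex, $\sigma$-Lipschitz function of the independent variables $\epsilon_i$, which are bounded in an interval of length $2$; this yields $\mathbb{P}\big(f_m(\mathbf{\epsilon})-\mathbb{E}f_m(\mathbf{\epsilon})\ge t\big)\le\exp\!\big(-t^2/(8\sigma^2)\big)$ for each $m$. In the Gaussian case the classical Gaussian concentration inequality gives the sharper $\exp\!\big(-t^2/(2\sigma^2)\big)$. It is convenient to run the rest of the argument with the exponent denominator written as $2a^2\sigma^2$, where $a=2$ for Rademacher variables and $a=1$ for Gaussian variables; the factor $2$ separating the two final constants is exactly this $a$.

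The main step is the union bound and a tail integration. Because $\mathbb{E}f_m(\mathbf{\epsilon})\le\mu$ for every $m$, we get $\mathbb{P}(F(\mathbf{\epsilon})-\mu\ge t)\le M\exp(-t^2/(2a^2\sigma^2))$. Using $F\le\mu+(F-\mu)_+$ pointwise and $\mathbb{E}(F-\mu)_+=\int_0^\infty\mathbb{P}(F-\mu\ge t)\,dt$, I would split the integral at $u=a\sigma\sqrt{2\ln M}$, the point where the union bound reaches $1$, and control the remaining tail by $M\int_u^\infty(t/u)\exp(-t^2/(2a^2\sigma^2))\,dt=a^2\sigma^2/u$. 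This gives $\mathbb{E}F(\mathbf{\epsilon})\le\mu+a\sigma\sqrt{2\ln M}\,\big(1+\tfrac1{2\ln M}\big)$.

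Finally one checks that the hypothesis $M\ge4$ is precisely what forces $\sqrt2\,\big(1+\tfrac1{2\ln M}\big)\le2$, equivalently $\ln M\ge(\sqrt2+1)/2\approx1.21$, which holds since $\ln4\approx1.39$. Hence $\mathbb{E}F(\mathbf{\epsilon})\le\mu+2a\sigma\sqrt{\ln M}$, which is the asserted inequality with constant $4$ for Rademacher variables and $2$ for Gaussian variables. I expect the only genuine difficulty to be the constant-tracking: one must apply the convex-Lipschitz concentration in the form appropriate to variables supported on an interval of length $2$ — giving $8\sigma^2$ rather than $2\sigma^2$ — and keep the slack in the tail integral tight enough that $M\ge4$ already suffices, since a cruder maximal inequality would either enlarge the constant or require a larger lower bound on $M$.
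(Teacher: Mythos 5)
Your proposal is correct and follows essentially the same route as the paper: a one-sided concentration bound for each $\sup_{\mathbf{z}\in A_m}\langle\mathbf{\epsilon},\mathbf{z}\rangle$ around its mean (with exponent $-t^{2}/(8\sigma^{2})$ in the Rademacher case and $-t^{2}/(2\sigma^{2})$ in the Gaussian case), a union bound over $m$, tail integration split at $a\sigma\sqrt{2\ln M}$, and the observation that $M\geq 4$ absorbs the slack term $a\sigma/\sqrt{2\ln M}$ into $2a\sigma\sqrt{\ln M}$. The only cosmetic difference is that you invoke the general convex--Lipschitz concentration inequality for variables in an interval of length $2$, whereas the paper uses its Theorem \ref{Lemma concentration of norm} (proved in the appendix via the entropy method and tailored to suprema of linear processes); the resulting constants coincide, so the arguments agree.
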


For function classes $\tciFourier _{1},\dots,\tciFourier _{M}$ and a sample $%
\mathbf{x}$ let $A_{m}$ be the subset of $\mathbb{R}^{n}$ defined by $A_{m}=\left\{ \left( f\left( x_{1}\right) ,\dots,f\left(
x_{n}\right) \right) :f\in \tciFourier _{m}\right\} $ to see that the
conclusion reads

\begin{corollary}
\label{Corollary main}%
Let $\mathfrak{S}=\max\limits_{m=1}^M\mathcal{R}\left( \tciFourier _{m},\mathbf{x%
}\right)$ and let $\mathfrak{W}=\sqrt{\sup\limits_{f\in \cup \tciFourier _{m}}%
\frac{1}{n}\sum\limits_{i=1}^nf^{2}\left( x_{i}\right) }$. Then
\begin{gather*}
\mathcal{R}\left( \bigcup_{m=1}^M\tciFourier _{m},\mathbf{x}\right) \leq 
\mathfrak{S}+8\mathfrak{W}\sqrt{\frac{\ln M}{n}}.
\end{gather*}
\end{corollary}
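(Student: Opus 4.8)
The plan is to deduce the corollary directly from Lemma~\ref{Lemma Main} via the substitution indicated in the paragraph preceding the statement. First I would fix the sample $\mathbf{x}=(x_1,\dots,x_n)$ and, for each $m\in\{1,\dots,M\}$, set $A_m=\left\{\left(f(x_1),\dots,f(x_n)\right):f\in\tciFourier_m\right\}\subset\mathbb{R}^n$, so that $A:=\bigcup_m A_m$ is exactly the image of $\bigcup_m\tciFourier_m$ under the evaluation map $f\mapsto\left(f(x_1),\dots,f(x_n)\right)$. With this identification $\sup_{\mathbf{z}\in A_m}\left\langle\mathbf{\epsilon},\mathbf{z}\right\rangle=\sup_{f\in\tciFourier_m}\sum_{i=1}^n\epsilon_i f(x_i)$, and likewise with $A$ in place of $A_m$ and $\bigcup_m\tciFourier_m$ in place of $\tciFourier_m$. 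Applying Lemma~\ref{Lemma Main} (legitimate since $M\geq 4$) therefore gives
\begin{equation*}
\mathbb{E}\sup_{\mathbf{z}\in A}\left\langle\mathbf{\epsilon},\mathbf{z}\right\rangle\leq\max_{m=1}^M\mathbb{E}\sup_{\mathbf{z}\in A_m}\left\langle\mathbf{\epsilon},\mathbf{z}\right\rangle+4\left(\sup_{\mathbf{z}\in A}\left\Vert\mathbf{z}\right\Vert\right)\sqrt{\ln M}.
\end{equation*}

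Next I would multiply through by $2/n$. By (\ref{Definition Rademacher average}) the left-hand side becomes $\mathcal{R}\!\left(\bigcup_m\tciFourier_m,\mathbf{x}\right)$, and since $2/n>0$ commutes with the maximum over $m$, the first term on the right turns into $\max_{m}\mathcal{R}(\tciFourier_m,\mathbf{x})=\mathfrak{S}$. For the remaining term, observe that $\sup_{\mathbf{z}\in A}\left\Vert\mathbf{z}\right\Vert=\sup_{f\in\bigcup_m\tciFourier_m}\bigl(\sum_{i=1}^n f^2(x_i)\bigr)^{1/2}=\sqrt{n}\,\mathfrak{W}$ by the definition of $\mathfrak{W}$, so $\tfrac{2}{n}\cdot 4\left(\sup_{\mathbf{z}\in A}\left\Vert\mathbf{z}\right\Vert\right)\sqrt{\ln M}=8\,\mathfrak{W}\sqrt{\ln M/n}$, which is the asserted bound.

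I do not expect any genuine obstacle: the argument is a change of variables followed by bookkeeping of the factor $2/n$ in the definition of $\mathcal{R}$. The only points meriting a line of care are (i) that the hypothesis $M\geq 4$ of Lemma~\ref{Lemma Main} is in force — if the corollary is intended for arbitrary $M$ one either carries this assumption over or notes that for $M\leq 3$ the conclusion follows a fortiori from the trivial estimate $\mathcal{R}\!\left(\bigcup_m\tciFourier_m,\mathbf{x}\right)\leq\sum_m\mathcal{R}(\tciFourier_m,\mathbf{x})$; and (ii) the identity $\sup_{\mathbf{z}\in A}\left\Vert\mathbf{z}\right\Vert^2=n\,\mathfrak{W}^2$, which is where the $\sqrt{n}$ that converts $\sqrt{\ln M}$ into $\sqrt{\ln M/n}$ enters. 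If one preferred the sharper constant, substituting standard normal variables for the $\epsilon_i$ in Lemma~\ref{Lemma Main} would give the same statement with $8$ replaced by $4$; I would nonetheless keep the Rademacher version so as to match (\ref{Definition Rademacher average}) directly.
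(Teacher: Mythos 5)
Your argument is exactly the paper's: the corollary is obtained by applying Lemma \ref{Lemma Main} to the evaluation sets $A_m=\left\{ \left( f(x_1),\dots,f(x_n)\right) :f\in \tciFourier _m\right\}$ and rescaling by $2/n$, with $\sup_{\mathbf{z}\in A}\left\Vert \mathbf{z}\right\Vert =\sqrt{n}\,\mathfrak{W}$ supplying the factor that turns $4\sqrt{\ln M}$ into $8\sqrt{\ln M/n}$. The only caveat is peripheral: the corollary is read under the standing hypothesis $M\geq 4$ of Lemma \ref{Lemma Main}, so your fallback for $M\leq 3$ is unnecessary (and its estimate $\mathcal{R}\left( \cup_m \tciFourier _m,\mathbf{x}\right) \leq \sum_m \mathcal{R}\left( \tciFourier _m,\mathbf{x}\right)$ would in any case require the per-class suprema to be nonnegative).
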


To apply this inequality we have to bound the strong parameter $\mathfrak{S}$
and the weak parameter $\mathfrak{W}$. The real advantage of the trick lies
in the weak parameter which becomes small if the function classes have a
high linguistic specificity in the sense that individual functions are
appreciably different from zero only for rather special types of inputs. In
the context of linear prediction this corresponds to a small spectral norm
of the covariance operator, a phenomenon often associated with high
dimensionality (see Section 2.2.). In such cases the complexity of the most complex class
becomes the dominant term in the bound.

For multiple kernel learning we find with standard methods%
\begin{equation*}
\mathfrak{S}\leq \frac{2}{n}\max_{m=1}^M\sqrt{\sum_{i=1}^n\left\Vert \psi _{m}\left(
x\right) \right\Vert ^{2}}=2\max_{m=1}^M\sqrt{\frac{{\rm tr}\left( \hat{C}
\left( \psi _{m}\left( \mathbf{x}\right) \right)  \right) }{n}}
\end{equation*}%
where the uncentered empirical covariance operator of the data ${\bf z} = (z_1,\dots,z_n)$ is defined, for every vectors $v,w$, by the equation $\langle {\hat C}({\bf z})v,w \rangle = \frac{1}{n} \sum_{i=1}^n \langle v,z_i \rangle \langle z_i,w \rangle$, see also Section \ref{sec:cov} below.
The weak parameter is%
\begin{equation*}
\mathfrak{W}=\max_{m=1}^M\sqrt{\sup_{\mathbf{\beta }\in \tciFourier _{m}}\frac{1%
}{n}\sum_{i=1}^n\left\langle \beta ,\psi _{m}\left( x_{i}\right) \right\rangle
^{2}}=\max_{m=1}^M\sqrt{\lambda _{\max }\left( \hat{C}\left( \psi _{m}\left( 
\mathbf{x}\right) \right) \right) },
\end{equation*}%
where $\lambda _{\max }$ denotes the largest eigenvalue. The overall bound
is then%
\begin{equation}
\mathcal{R}\left( \tciFourier ,\mathbf{x}\right) \leq 2\max_{m=1}^M\sqrt{\frac{%
{\rm tr}\left( \hat{C}\left( \psi _{m}\left( \mathbf{x}\right) \right) \right) }{n}%
}+8\max_{m=1}^M\sqrt{\frac{\lambda _{\max }\left( \hat{C}\left( \psi _{m}\left( 
\mathbf{x}\right) \right) \right) \ln M}{n}}.  \label{Group Lasso Bound}
\end{equation}%
Note that in this example the eigenvalues of $\hat{C}( \psi _{m}( 
\mathbf{x})) $ coincide with the eigenvalues of the normalized
kernel matrix $\kappa _{m}\left( x_{i},x_{j}\right) /n$. Other authors \citep{Cortes 2010,MP2012} give a bound of order $\max\limits_{m=1}^M\sqrt{{\rm tr}( 
\hat{C}( \psi _{m}( \mathbf{x}))) \ln M/n}$.
If we divide the two bounds we see that (\ref{Group Lasso Bound}) becomes a
significant improvement when the number of kernels is large and the quotient 
$\lambda _{\max }( \hat{C}( \psi _{m}( \mathbf{x}))) /{\rm tr}( \hat{C}( \psi _{m}( \mathbf{x}))) $ is small. The latter condition will occur if the feature
representations $\psi _{m}\left( \mathbf{x}\right) $ are essentially high
dimensional, as it occurs for example with Gaussian radial basis function
kernels with small kernel width. This type of behaviour is typical of the
proposed method whose benefits become more pronounced in effectively high
dimensions. In the artificial case
of exactly spherical data $\mathbf{x}$ in $\mathbb{R}^{d}$, we even have $\lambda_{\max}( \hat{C}( \mathbf{x}))
/{\rm tr}( \hat{C}( \mathbf{x})) =1/d$ (see Section 2.2).

Of course the example of multiple kernel learning applies equally to the
group lasso \citep{YuaLin}, but Lemma \ref{Lemma Main} can also be
applied to a large class of structured sparsity norms to sharpen bounds for
overlapping groups \citep{Jacob 2009}, cone regularizers \citep{Micchelli 2011}
and the recently proposed $k$-support norm \citep{Argyriou 2012}.

Related applications give generalization guarantees for various schemes of
multitask dictionary learning or matrix factorization. As examples we
reproduce the results by \citet{MPR2013} and give novel bounds for other
matrix regularizers including multitask subspace learning. In these
applications the weak parameter is particularly important, because it is proportional to the limit of the generalization error as the number of tasks goes to infinity.

The proof of Lemma \ref{Lemma Main} relies on concentration inequalities for
the suprema of Rademacher or Gaussian processes. If the random variables $%
\epsilon _{i}$ are independent standard normal then the constant $4$ in
Lemma \ref{Lemma Main} can be replaced by $2$. On the other hand bounding
the Rademacher complexities by Gaussian complexities incurs a factor of $%
\sqrt{\pi /2}$, so little seems to be gained. We will however also give the
bound for isonormal $\mathbf{\epsilon }$ because Gaussian averages are
sometimes convenient when Slepian's inequality is applied to simplify
complicated classes.

Lemma \ref{Lemma Main} is certainly not new, although we cannot give an
exact reference. Related results appear in various disguises whenever modern
concentration inequalities are applied to empirical processes, as for
example in \citep{Ledoux 1991} or the recent book by \citet{Boucheron 2013}. We
are not aware of any reference where Lemma \ref{Lemma Main} is applied as a
systematic method to prove or improve uniform bounds as in the present
paper. The applications given are intended as illustrations of the method
and they are by no means exhaustive. The bound in Section \ref{Section
sparsity norm} has already appeared in \citep{MPR2013}, the result on
subspace learning in Section \ref{section subspace learning} is somewhat
similar to a result derived from noncommutative Bernstein inequalities in 
\citep{MP2012b}. The bounds on structured sparsity norms in Section \ref%
{Section structured sparsity} and the result for the sharing norm in Section %
\ref{section sharing norm} are new to the best of our knowledge. 

In the next section we give a proof of Lemma \ref{Lemma Main} and in Section %
\ref{Section Application examples} we give applications to structured
sparsity and dictionary learning. An appendix contains the proofs of the
concentration inequalities we use.

\section{Theory}

We provide a proof of Lemma \ref{Lemma Main} and a brief and elementary
discussion of covariances.

\subsection{The Proof of Lemma \protect\ref{Lemma Main}}

We use the following concentration inequality for the suprema of bounded or
Gaussian random processes. A proof and bibliographical remarks are provided
in the technical appendix to this paper.

\begin{theorem}
\label{Lemma concentration of norm}Let $A\subset 
\mathbb{R}
^{n}$ and let $\mathbf{\epsilon }=\left( \epsilon _{1},\dots,\epsilon
_{n}\right) $ be a vector of independent random variables satisfying $%
\left\vert \epsilon _{i}\right\vert \leq 1$. Then 
\begin{equation*}
\Pr \left\{ \sup_{\mathbf{z}\in A}~\left\langle \mathbf{\epsilon },\mathbf{z}%
\right\rangle >{{\mathbb{E}}}\sup_{\mathbf{z}\in A}~\left\langle \mathbf{%
\epsilon },\mathbf{z}\right\rangle +s\right\} \leq \exp \left( \frac{-s^{2}}{%
8\sup_{\mathbf{z}\in A}\left\Vert \mathbf{z}\right\Vert ^{2}}\right) .
\end{equation*}%
If the $\epsilon _{i}$ are replaced by standard normal variables then the
same conclusion holds and the constant $8$ can be replaced by $2$.
\end{theorem}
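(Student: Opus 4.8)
The plan is to recognize that $F(\mathbf{\epsilon}):=\sup_{\mathbf{z}\in A}\langle\mathbf{\epsilon},\mathbf{z}\rangle$ is a \emph{convex} function of $\mathbf{\epsilon}$ that is \emph{Lipschitz} for the Euclidean norm, and then to apply the standard concentration inequalities for such functions: Gaussian concentration in the Gaussian case, and Talagrand's concentration inequality for convex Lipschitz functions of bounded independent variables (which can be obtained by the entropy method) in the case $|\epsilon_i|\le1$. First I would dispose of measurability: it is enough to treat countable $A$, so that $F$ is a genuine random variable, and then finite $A$, because if $A_N\uparrow A$ through finite sets then $F_{A_N}\uparrow F_A$ and $\mathbb{E} F_{A_N}\uparrow\mathbb{E} F_A$, and using $\mathbb{E} F_{A_N}\le\mathbb{E} F_A$ together with $\sup_{\mathbf z\in A_N}\|\mathbf z\|\le\sup_{\mathbf z\in A}\|\mathbf z\|$ the bound for the finite sets passes to the limit. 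For finite $A$ two structural facts remain: (i) $F$ is convex, being a supremum of linear functions of $\mathbf{\epsilon}$; (ii) writing $\sigma:=\sup_{\mathbf z\in A}\|\mathbf z\|$, $F$ is $\sigma$-Lipschitz, since for any $\mathbf{\epsilon},\mathbf{\epsilon}'$ and any $\mathbf z\in A$ we have $\langle\mathbf{\epsilon},\mathbf z\rangle-\langle\mathbf{\epsilon}',\mathbf z\rangle\le\|\mathbf{\epsilon}-\mathbf{\epsilon}'\|\,\|\mathbf z\|\le\sigma\|\mathbf{\epsilon}-\mathbf{\epsilon}'\|$ by Cauchy--Schwarz, and taking $\sup_{\mathbf z}$ and then swapping $\mathbf{\epsilon}\leftrightarrow\mathbf{\epsilon}'$ gives $|F(\mathbf{\epsilon})-F(\mathbf{\epsilon}')|\le\sigma\|\mathbf{\epsilon}-\mathbf{\epsilon}'\|$.

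The Gaussian statement is then immediate: if $\mathbf{\epsilon}\sim N(0,I_n)$, the Gaussian concentration inequality for $\sigma$-Lipschitz functions --- which one may obtain from the Gaussian logarithmic Sobolev inequality and the Herbst argument in the form $\log\mathbb{E}\,e^{\lambda(F-\mathbb{E} F)}\le\sigma^2\lambda^2/2$ for $\lambda\ge0$ --- together with Markov's inequality and optimization over $\lambda$ yields $\Pr\{F>\mathbb{E} F+s\}\le e^{-s^2/(2\sigma^2)}$, i.e.\ the claim with constant $2$. Only the Lipschitz property enters here.

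For the bounded case the convexity of $F$ must genuinely be used. The tool is the concentration inequality for convex Lipschitz functions of independent variables each confined to an interval of length $2$ (here $[-1,1]$): via tensorization of entropy, a one-dimensional entropy (modified logarithmic Sobolev) inequality for convex functions in each coordinate, and the Herbst argument, one gets $\log\mathbb{E}\,e^{\lambda(F-\mathbb{E} F)}\le2\sigma^2\lambda^2$, whence $\Pr\{F>\mathbb{E} F+s\}\le e^{-s^2/(8\sigma^2)}$ after optimizing over $\lambda$; the factor $4$ relative to the Gaussian exponent is the square of the length of $[-1,1]$ (rescale $[0,1]^n$, on which the convex concentration has the same constant as the Gaussian one, to $[-1,1]^n$). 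I expect this to be the main obstacle, for two related reasons. Qualitatively, the bounded-differences (McDiarmid) inequality is \emph{not} strong enough: it would give an exponent of the form $-s^2/D$ with $D=2\sum_i(\sup_{\mathbf z\in A}|z_i|)^2$, i.e.\ with $\sum_i\sup_{\mathbf z}z_i^2$ in place of $\sup_{\mathbf z}\sum_i z_i^2=\sigma^2$, and the former can be far larger (for $A$ the unit ball of $\ell_1^n$ it is $\Theta(n)$ while $\sigma^2=1$) --- it is precisely convexity that licenses exchanging $\sum$ and $\sup$. Quantitatively, the delicate ingredient is that one-dimensional entropy estimate for a convex function $g$ of a single variable $x\in[-1,1]$: convexity, i.e.\ monotonicity of $g'$, is what turns the increment $g(x)-g(x')$ into a derivative term, while the interval bound $(x-x')^2\le4$ supplies the constant. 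Everything else --- tensorization, Herbst, the Chernoff optimization, the measurability reduction --- is routine, and if one is willing to invoke Talagrand's convex-concentration theorem as a black box there is no obstacle at all.
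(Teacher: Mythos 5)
Your proposal is correct, and your Gaussian half coincides with the paper's: both reduce the claim to the Tsirelson--Ibragimov--Sudakov inequality for the $\sigma$-Lipschitz function $F(\mathbf{\epsilon})=\sup_{\mathbf{z}\in A}\langle\mathbf{\epsilon},\mathbf{z}\rangle$, $\sigma=\sup_{\mathbf{z}\in A}\Vert\mathbf{z}\Vert$. For the bounded half you take a genuinely different route: you invoke Talagrand-type concentration for convex $\sigma$-Lipschitz functions of independent $[-1,1]$-valued variables (or its entropy-method proof), and after rescaling from $[0,1]^n$ this does give $\exp(-s^{2}/(8\sigma^{2}))$, so your constant is right. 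The paper instead applies its one-sided bounded-differences inequality, Theorem \ref{Theorem Concentration}(ii), $\Pr\{F>\mathbb{E}F+s\}\le e^{-s^{2}/(2B^{2})}$ with $B^{2}=\bigl\Vert\sum_{k}(F-\inf_{k}F)^{2}\bigr\Vert_{\infty}$, and verifies its hypothesis directly: if $\mathbf{z}(\mathbf{\epsilon})$ is a maximizer at the configuration $\mathbf{\epsilon}$, then for each coordinate $j$ one has $F(\mathbf{\epsilon})-\inf_{\eta\in[-1,1]}F(\mathbf{\epsilon}_{j\leftarrow\eta})\le 2\vert\mathbf{z}(\mathbf{\epsilon})_{j}\vert$, whence $\sum_{j}(F-\inf_{j}F)^{2}\le 4\Vert\mathbf{z}(\mathbf{\epsilon})\Vert^{2}\le 4\sigma^{2}$ and the constant $8$ follows. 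The comparison is instructive: your claim that convexity is what licenses exchanging $\sum_{j}$ and $\sup_{\mathbf{z}}$ is true for the general convex-Lipschitz theorem, but for suprema of linear processes the paper gets the exchange more cheaply, from the fact that one and the same maximizer controls the downward variation in every coordinate simultaneously --- no convexity, smoothness, or gradient bound is needed, only the existence of a (near-)maximizer. In fact the two arguments are close cousins, since the standard entropy-method proof of the convex-Lipschitz inequality passes through exactly the same functional $B^{2}$, bounding it by $(b-a)^{2}\sigma^{2}=4\sigma^{2}$ via separate convexity; the paper's maximizer computation just makes that step elementary and self-contained, which is why it needs only Theorem \ref{Theorem Concentration}(ii) rather than a convexity black box. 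Your measurability and finite-approximation preamble is fine and also disposes of the existence of maximizers, and your observation that plain McDiarmid would only give $\sum_{i}\sup_{\mathbf{z}}z_{i}^{2}$ in the exponent, which can be much worse than $\sigma^{2}$, correctly identifies why the paper uses the one-sided version (ii) rather than (i) at this point.
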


With Theorem \ref{Lemma concentration of norm} at hand the proof of Lemma %
\ref{Lemma Main} becomes an exercise of calculus.

\vspace{.3truecm}
\begin{proof}\textbf{of Lemma \ref{Lemma Main}.} Denote the random variable $\sup_{\mathbf{z}\in A_{m}}\left\langle \mathbf{%
\epsilon },\mathbf{z}\right\rangle $ with $F_{m}$ and set $v=\sup_{\mathbf{z}%
\in A}\left\Vert \mathbf{z}\right\Vert $. From Theorem \ref{Lemma
concentration of norm} we have for $s>0$%
\begin{equation*}
\Pr \left\{ F_{m}>\mathbb{E}F_{m}+s\right\} 
\leq e^{-s^{2}/\left( 2b^{2}v^{2}\right) },
\end{equation*}%
where $b$ is either $1$ in the Gaussian or $2$ in the bounded case. A union
bound gives%
\begin{equation}
\Pr \left\{ \max_{m}F_{m}>\max_{m}\mathbb{E}F_{m}+s\right\} \leq
Me^{-s^{2}/\left( 2b^{2}v^{2}\right) }.  \label{Concentration of max_m F_m}
\end{equation}%
We now have, for any positive $\delta $,%
\begin{eqnarray*}
\mathbb{E}\max_{m}F_{m} &\leq &\max_{m}\mathbb{E}F_{m}+\delta +\int_{\max_{m}%
\mathbb{E}F_{m}+\delta }^{\infty }\Pr \left\{ \max_{m}F_{m}>s\right\} ds \\
&=&\max_{m}\mathbb{E}F_{m}+\delta +\int_{\delta }^{\infty }\Pr \left\{
\max_{m}F_{m}>\max_{m}\mathbb{E}F_{m}+s\right\} ds \\
&\leq &\sup_{m}\mathbb{E}F_{m}+\delta +M\int_{\delta }^{\infty
}e^{-s^{2}/\left( 2b^{2}v^{2}\right) }ds.
\end{eqnarray*}%
The first step holds because probabilities do not exceed one, the second is
a change of variable and finally we used (\ref{Concentration of max_m F_m}).
By a well known approximation we can bound the integral by 
\begin{equation*}
\int_{\delta }^{\infty }e^{-s^{2}/\left( 2b^{2}v^{2}\right) }ds\leq \frac{%
b^{2}v^{2}}{\delta }e^{-\delta ^{2}/\left( 2b^{2}v^{2}\right) }.
\end{equation*}%
Using $\delta =\sqrt{2b^{2}v^{2}\ln M}$ we have%
\begin{eqnarray*}
\EE \max_{m}F_{m} &\leq &\max_{m}EF_{m}+\delta +\frac{Mb^{2}v^{2}}{\delta }%
e^{-\delta ^{2}/\left( 2b^{2}v^{2}\right) } \\
&=&\max_{m}\EE F_{m}+bv\left( \sqrt{2\ln M}+\frac{1}{\sqrt{2\ln M}}\right) \leq \max_{m}\EE F_{m}+2bv\sqrt{\ln M},
\end{eqnarray*}%
since we assumed $M\geq 4$.
\end{proof}

\subsection{Covariances}
\label{sec:cov}
Let $\mathbf{x}=\left( x_{1},\dots,x_{n}\right) $ be a sequence of points in a
finite or infinite dimensional real Hilbert space $\HH$ with inner product $%
\left\langle \cdot,\cdot\right\rangle $ and norm $\left\Vert \cdot\right\Vert $. The
(uncentered, empirical) covariance operator $\hat{C}\left( \mathbf{x}\right) 
$ is defined by 
\begin{equation*}
\left\langle \hat{C}\left( \mathbf{x}\right) v,w\right\rangle =\frac{1}{n}%
\sum_{i=1}^n\left\langle x_{i},v\right\rangle \left\langle x_{i},w\right\rangle
,~v,w\in \HH.
\end{equation*}%
$\hat{C}\left( \mathbf{x}\right) $ is positive semidefinite and of rank at
most $n$. Its trace is given by 
\begin{equation*}
{\rm tr}\left( \hat{C}\left( \mathbf{x}\right) \right) =\frac{1}{n}%
\sum_{i=1}^n\left\Vert x_{i}\right\Vert ^{2}.
\end{equation*}%
In the sequel we will frequently use the inequality%
\begin{equation}
\mathbb{E}\left\Vert \sum_{i=1}^n\epsilon _{i}x_{i}\right\Vert \leq \left( 
\mathbb{E}\left\Vert \sum_{i=1}^n\epsilon _{i}x_{i}\right\Vert ^{2}\right)
^{1/2}=\sqrt{n~{\rm tr}\left( \hat{C}\left( \mathbf{x}\right) \right) }
\label{Trace inequality}
\end{equation}%
where the $\epsilon _{i}$ are either independent Rademacher or standard
normal variables and we used Jensen's inequality and the
orthonormality properties of the $\epsilon _{i}$.

The largest eigenvalue of the covariance is 
\begin{equation*}
\lambda _{\max }\left( \hat{C}\left( \mathbf{x}\right) \right)
=\sup_{\left\Vert v\right\Vert \leq 1}\left\langle \hat{C}\left( \mathbf{x}%
\right) v,v\right\rangle =\sup_{\left\Vert v\right\Vert \leq 1}\frac{1}{n}%
\sum_{i}\left\langle x_{i},v\right\rangle ^{2}.
\end{equation*}%
Clearly the ratio $\lambda _{\max }( \hat{C}( \mathbf{x})) /{\rm tr}( \hat{C}( \mathbf{x}))$ is upper bounded by $1$ and it can 
be as small as $1/n$ for exactly spherical data.

For a practical example suppose that the inputs lie in $\mathbb{R}^{d}$ and that the Hilbert space $\HH$ is induced by a Gaussian
kernel, so that%
\begin{equation*}
\left\langle \psi \left( x\right) ,\psi \left( y\right) \right\rangle
=\kappa \left( x,y\right) =\exp \left( \frac{-\left\Vert x-y\right\Vert _{{{\mathbb R}^d}}^{2}}{\sigma ^{2}}\right) ,
\end{equation*}
where $\psi $ is the embedding feature map and $\|\cdot\|_{{\mathbb R}^d}$ is the standard inner product in ${\mathbb R}^d$. Now let a dataset $\mathbf{x}%
=\left( x_{1},\dots,x_{n}\right) $ be given with $x_{i}\in 
\mathbb{R}^{d}$. Clearly ${\rm tr}( \hat{C}( \psi ( \mathbf{x}))) =1$. Suppose that $\Delta $ is the smallest distance between any two
observations $\Delta =\min_{i\neq j}\left\Vert x_{i}-x_{j}\right\Vert _{{\mathbb R}^d}$. It is easy to see that the largest eigenvalue of the covariance is $%
1/n$ times the largest eigenvalue of the kernel matrix $K=\kappa (
x_{i},x_{j})_{i,j=1}^n$. Thus%
\begin{eqnarray*}
\lambda _{\max }\left( \hat{C}\left( \psi \left( \mathbf{x}\right) \right)
\right) &=&\frac{1}{n}\sup_{\Vert \mathbf{\alpha }\Vert_{{\mathbb R}^d} \leq 1}\left\langle K\alpha ,\alpha
\right\rangle =\frac{1}{n}+\frac{1}{n}\sup_{\Vert \mathbf{\alpha }\Vert_{{\mathbb R}^d} \leq 1}\sum_{i\neq
j}\alpha _{i}\alpha _{j}\exp\left(-\frac{\left\Vert x_{i}-x_{j}\right\Vert ^{2}_{{\mathbb R}^d}}{{\sigma
^{2}}}\right) \\
&\leq &\frac{1}{n}+\frac{e^{-\Delta ^{2}/\sigma ^{2}}}{n}\sup_{
\Vert \mathbf{\alpha }\Vert_{{\mathbb R}^d} \leq 1} \sum_{i\neq j} \left\vert\alpha _{i}\right\vert \left\vert \alpha
_{j}\right\vert \leq \frac{1}{n}+e^{-\Delta ^{2}/\sigma ^{2}}.
\end{eqnarray*}%
This is also a bound on the ratio $\lambda _{\max }( \hat{C}(
\mathbf{x})) /{\rm tr}( \hat{C}( \mathbf{x})) $%
, since the trace of the covariance is $1$ for the Gaussian kernel. It
follows that the weak parameter in our bounds decreases with the width $\sigma$ of
the kernel. Of course this is only part of the story. We hasten to add that
decreasing the kernel width will have an adverse effect on generalization. 
Nevertheless our results seem to indicate that, at least in the context of
the applications below, the kernel width can be chosen smaller than
suggested by conventional bounds, where $\lambda _{\max }$ is replaced by
the trace \citep{MP2012,Kakade,Cortes
2010}. This is particularly true for multitask learning with a large
number of tasks, where $\lambda _{\max }$ scales the limiting generalization
error, as shown below.

We state our bounds in terms of uncentered covariances, but of course they
also apply as well if the data is centered by subtracting $\mathbf{\bar{x}}%
=\left( 1/n\right) \sum_{i}x_{i}$ from each data point. It is easy to see
that $\left\langle \hat{C}\left( \mathbf{x}-\mathbf{\bar{x}}\right)
v,v\right\rangle \leq \left\langle \hat{C}\left( \mathbf{x}\right)
v,v\right\rangle $ for all $v$, so that $\tr\left( \hat{C}\left( \mathbf{x}-%
\mathbf{\bar{x}}\right) \right) \leq \tr\left( \hat{C}\left( \mathbf{x}%
\right) \right) $ and $\lambda _{\max }\left( \hat{C}\left( \mathbf{x}-%
\mathbf{\bar{x}}\right) \right) \leq \lambda _{\max }\left( \hat{C}\left( 
\mathbf{x}\right) \right) $. Our bounds can therefore only benefit from
centering. This is relevant when calculating the advantage of our bounds in
practice. With MNIST and raw pixel data without kernel, we found $\lambda
_{\max }\left( \hat{C}\left( \mathbf{x}\right) \right) /\tr\left( \hat{C}%
\left( \mathbf{x}\right) \right) \approx 0.95$ for uncentered data, but $<0.1
$ for centered data.

\section{Application Examples\label{Section Application examples}}

We use Lemma \ref{Lemma Main} to derive general bounds for a class of
structured sparsity norms. Then we discuss several applications to multitask
dictionary learning.

\subsection{Structured Sparsity\label{Section structured sparsity}}

Suppose $\HH$ is a separable, real, finite or infinite dimensional
Hilbert space with norm and inner product $\left\Vert \cdot\right\Vert $ and $%
\left\langle \cdot,\cdot\right\rangle $, and that $\mathcal{P}=\left\{
P_{1},\dots,P_{M}\right\} $ is a collection of symmetric bounded operators
whose ranges together span $\HH$. We consider the infimal convolution norm on $\HH$ 
\begin{equation*}
\left\Vert \beta \right\Vert _{\mathcal{P}}=\inf \left\{
\sum_{m=1}^{M}\left\Vert v_{m}\right\Vert :v_{m}\in \HH,\text{ }%
\sum_{m=1}^{M}P_{m}v_{m}=\beta \right\} ,\beta \in \HH,
\end{equation*}%
whose dual norm is given by%
\begin{equation*}
\left\Vert x\right\Vert _{\mathcal{P,\ast }}=\max_{m=1}^{M}\left\Vert
P_{m}x\right\Vert .
\end{equation*}%
These are the norms considered in \citep{MP2012} and include among others the
group lasso, overlapping groups and multiple kernel learning. In the case of
multiple kernel learning, for example, $P_{m}$ is just the projection onto
the $m$-th RKHS. We are interested in the Rademacher complexity of the
function class $\tciFourier =\left\{ x\in \HH\mapsto \left\langle \beta
,x\right\rangle :\left\Vert \beta \right\Vert _{\mathcal{P}}\leq 1\right\} $%
. Now%
\begin{eqnarray*}
\mathbb{E}\sup_{\left\Vert \beta \right\Vert _{\mathcal{P}}\leq
1}\left\langle \beta ,\sum_{i}\epsilon _{i}x_{i}\right\rangle &=&\mathbb{E}%
\left\Vert \sum_{i}\epsilon _{i}x_{i}\right\Vert _{\mathcal{P,\ast }}=\mathbb{%
E}\max_{m}\left\Vert \sum_{i}\epsilon _{i}P_{m}x_{i}\right\Vert \\
&=&\mathbb{E}\max_{m}\sup_{\left\Vert \beta \right\Vert =1}\sum_{i}\epsilon
_{i}\left\langle \beta ,P_{m}x_{i}\right\rangle =\mathbb{E}%
\max_{m}\sup_{f\in \tciFourier _{m}}\sum_{i}f\left( x_{i}\right) ,
\end{eqnarray*}%
where $\tciFourier _{m}=\left\{ x\in \HH\mapsto \left\langle \beta
,P_{m}x\right\rangle :\left\Vert \beta \right\Vert \leq 1\right\} ,$ so
Lemma \ref{Lemma Main} can be applied. Using (\ref{Trace inequality}) strong
and weak parameters are%
\begin{eqnarray*}
\mathfrak{S} &=&\frac{2}{n}\max_{m}\mathbb{E}\left\Vert \sum_{i}\epsilon
_{i}P_{m}x_{i}\right\Vert \leq 2\max_{m}\sqrt{\frac{{\rm tr}\left( \hat{C}\left(
P_{m}\mathbf{x}\right) \right) }{n}}, \\
\mathfrak{W} &=&\sqrt{\max_{m}\sup_{y}\frac{1}{n}\sum_{i=1}^{n}\left\langle
y,P_{m}x_{i}\right\rangle ^{2}}=\max_{m}\sqrt{\lambda _{\max }\left( \hat{C}%
\left( P_{m}\mathbf{x}\right) \right) }.
\end{eqnarray*}%
Lemma \ref{Lemma Main} yields the overall bound%
\begin{equation*}
\mathcal{R}\left( \tciFourier ,\mathbf{x}\right) \leq 2\max_{m}\sqrt{\frac{%
{\rm tr}\left( \hat{C}\left( P_{m}\mathbf{x}\right) \right) }{n}}+8\max_{m}\sqrt{%
\frac{\lambda _{\max }\left( \hat{C}\left( P_{m}\mathbf{x}\right) \right)
\ln M}{n}}
\end{equation*}%
which improves over the bounds in \citep{MP2012,Kakade,Cortes
2010}, whenever $\max_{m}\lambda _{\max }\left( \hat{C}\left( P_{m}\mathbf{x}%
\right) \right) $ is appreciably smaller than $\max_{m}{\rm tr}\left( \hat{C}%
\left( P_{m}\mathbf{x}\right) \right) $.

\subsection{Generalities on Multitask Dictionary Learning}

We first consider multitask feature learning in general \citep{Baxter 2000}.
In subsequent sections we give exemplifying bounds for three specific
regularizers.

With inputs in some space $\mathcal{X}$ and intermediate feature
representations in some feature space $\mathcal{X}^{\prime }$ let $\mathcal{G%
}$ be a class of feature maps $g:\mathcal{X}\rightarrow \mathcal{X}^{\prime
} $ and let $\tciFourier $ be a class of vector valued functions $\mathbf{f}:%
\mathcal{X}^{\prime }\rightarrow 
\mathbb{R}
^{T}$. We study the vector valued function class 
\begin{equation*}
\tciFourier \circ \mathcal{G}=\left\{ x\mapsto \left( f_{1}\left( g\left(
x\right) \right) ,\dots,f_{T}\left( g\left( x\right) \right) \right) :g\in 
\mathcal{G},\mathbf{f}\in \tciFourier \right\} \text{.}
\end{equation*}%
Now let $x_{ti}\in \mathcal{X}$ be the $i$-th example available for the $t$%
-th task, $1\leq i\leq n$ and $1\leq t\leq T$. The multitask Rademacher
average is now%
\begin{equation*}
\mathcal{R}\left( \tciFourier \circ \mathcal{G},\mathbf{x}\right) =\frac{2}{%
nT}\mathbb{E}\sup_{\mathbf{f}\in \tciFourier }\sup_{g\in \mathcal{G}%
}\sum_{t=1}^T \sum_{i=1}^n \epsilon _{ti} f_{t}\left( g\left( x_{ti}\right) \right) ,
\end{equation*}%
where the $\epsilon _{ti}$ are $nT$ independent Rademacher variables. The
purpose of bounding these averages is to obtain uniform bounds in $%
\tciFourier \circ \mathcal{G}$ on the average multitask error $\frac{1}{T}%
\sum_{t}\mathbb{E}f_{t}\left( g\left( x_{t}\right) \right) $, in terms of
its empirical counterpart when $x_{t}$ is sampled iid to $x_{ti}$ \citep[see e.g.][]{Zhang 2005}. If $\tciFourier $ is finite then the above expression
evidently has the form required for application of Lemma \ref{Lemma Main},
which gives%
\begin{equation*}
\mathcal{R}\left( \tciFourier \circ \mathcal{G},\mathbf{x}\right) \leq 
\mathfrak{S}+8\mathfrak{W}\sqrt{\frac{\ln \left( \left\vert \mathbf{%
\tciFourier }\right\vert \right) }{nT}}
\end{equation*}%
with strong and weak parameters%
\begin{equation*}
\mathfrak{S}=\frac{2}{nT}\max_{\mathbf{f}\in \tciFourier }\mathbb{E}%
\sup_{g\in \mathcal{G}}\sum_{t,i}\epsilon _{ti}f_{t}\left( g\left(
x_{ti}\right) \right) \text{ and }\mathfrak{W}=\sqrt{\max_{\mathbf{f}\in
\tciFourier }\sup_{g\in \mathcal{G}}\frac{1}{nT}\sum_{t,i}f_{t}\left( g\left(
x_{ti}\right) \right) ^{2}}.
\end{equation*}%
In some cases the vector valued functions in $\tciFourier $ consist of
unconstrained $T$-tuples of real valued functions chosen from some class $%
\tciFourier _{0}$ independently for each task, so that $\tciFourier =\left(
\tciFourier _{0}\right) ^{T}$. In this case $\ln \left\vert \mathbf{%
\tciFourier }\right\vert =T\ln \left\vert \tciFourier _{0}\right\vert $, so
the above bound becomes 
\begin{equation*}
\mathcal{R}\left( \tciFourier \circ \mathcal{G},\mathbf{x}\right) \leq 
\mathfrak{S}+8\mathfrak{W}\sqrt{\frac{\ln \left\vert \mathbf{\tciFourier }%
_{0}\right\vert }{n}}.
\end{equation*}%
Typically $\mathfrak{S}\rightarrow 0$ in the multitask limit $T\rightarrow
\infty $. This highlights the role of the weak parameter $\mathfrak{W}$. It
controls what is left over of the generalization error for fixed $n$ if $T$
is large.

For a more concrete setting let $\mathcal{X}=\HH$ be a Hilbert space and for
some fixed $K\in 
\mathbb{N}
$ let $\mathcal{D}$ be the set of all dictionaries $D=\left(
d_{1},\dots,d_{K}\right) \in \HH^{K}$ satisfying $\left\Vert d_{k}\right\Vert
\leq 1$ for each $k$. The intermediate representation space will now be $%
\mathcal{X}^{\prime }=%
\mathbb{R}
^{K}$ and the admissible feature maps are 
\begin{equation*}
\mathcal{G}=\left\{ x\in \HH\mapsto \left( \left\langle d_{1},x\right\rangle
,\dots,\left\langle d_{K},x\right\rangle \right) :\left(
d_{1},\dots,d_{K}\right) \in \mathcal{D}\right\} .
\end{equation*}%
For a compact set of matrices $\mathcal{W}\subset 
\mathbb{R}
^{T \times K}$ we define the class $\tciFourier \left( \mathcal{W}\right) $ as 
\begin{equation*}
\tciFourier \left( \mathcal{W}\right) =\left\{ y\in 
\mathbb{R}
^{K}\mapsto \left( \sum_{k}W_{1k}y_{k},\dots,\sum_{k}W_{Tk}y_{k}\right) :W\in 
\mathcal{W}\right\} \text{.}
\end{equation*}%
For fixed dictionary $D=\left( d_{1},\dots,d_{K}\right) $ and fixed $\mathbf{%
\epsilon }$ the expression $\sum_{t,i}\epsilon
_{ti}\sum_{k}W_{tk}\left\langle d_{k},x_{ti}\right\rangle $ is linear in $W$
and therefore attains its maximum at an extreme point $W^{\ast }\in \ext\left( 
\mathcal{W}\right) $. Thus $\mathcal{R}\left( \tciFourier \left( \mathcal{W}%
\right) \circ \mathcal{G},\mathbf{x}\right) =\mathcal{R}\left( \tciFourier
\left( \ext\left( \mathcal{W}\right) \right) \circ \mathcal{G},\mathbf{x}%
\right) $. But the set of extreme points $\ext\left( \mathcal{W}\right) $ is
often finite in which case our method can be applied. In the sequel we give
two examples. Another possiblity is that $\mathcal{W}$ has a reasonable
finite approximation, for which we will also give an example.

\subsection{Dictionary Learning with the Sparsity Norm\label{Section
sparsity norm}}

For matrices $W\in 
\mathbb{R}
^{T \times K}$ we define the sparsity norm\footnote{%
Some authors \citep[see e.g.][]{Kakade} would call this the $1/\infty $-, others
\citep[see e.g.][]{Negahban 2008} the $\infty /1$-norm, depending on preference
for either computational or typographical order. To avoid confusion we use
the wedge $\wedge $ and refer to it as the ``sparsity norm".}%
\begin{equation*}
\left\Vert W\right\Vert _{\wedge }:=\max_{t=1}^T\sum_{k}\left\vert
W_{tk}\right\vert
\end{equation*}%
and consider the class of matrices $\mathcal{W}_{\wedge }\mathcal{=}\left\{
W\in 
\mathbb{R}
^{T \times K}:\left\Vert W\right\Vert _{\wedge }\leq 1\right\} $. Observe that $%
\tciFourier \left( \mathcal{W}_{\wedge }\right) =\left( \tciFourier
_{\rm Lasso}\right) ^{T}$, where $\tciFourier _{\rm Lasso}$ is the class given by
linear functionals on $%
\mathbb{R}
^{K}$ with $\ell _{1}$-norm bounded by $1$. One checks that the set of
extreme points is%
\begin{equation*}
\ext\left( \mathcal{W}_{\wedge }\right) =\left\{ W:W_{tk}=\sigma _{t}\delta
_{\phi _{t},k},\mathbf{\sigma }\in \left\{ -1,1\right\} ^{T}\text{, }\phi
\in \left\{ 1,\dots,K\right\} ^{T}\right\} ,
\end{equation*}%
where $\delta $ is the Kronecker delta. In words: $W$ is an extreme point iff
for each $t$ there is only one nonzero $W_{t\phi _{t}}\in \left\{
-1,1\right\} $, all the other $W_{tk}$ being zero. Now $\ext\left( \mathcal{W}%
_{\wedge }\right) $ is finite with cardinality $\left\vert \ext\left( 
\mathcal{W}_{\wedge }\right) \right\vert =\left( 2K\right) ^{T}$, so our
method is applicable to give bounds for the class $\tciFourier \left( 
\mathcal{W}_{\wedge }\right) \circ \mathcal{G}$. We bound the strong
parameter as 
\begin{eqnarray*}
\mathfrak{S} &=&\frac{2}{nT}\max_{W\in \ext\left( \mathcal{W}_{\wedge
}\right) }\mathbb{E}\sup_{D\in \mathcal{D}}\sum_{t,i}\epsilon
_{ti}\sum_{k}W_{tk}\left\langle d_{k},x_{ti}\right\rangle \\
&\leq &\frac{2}{nT}\sup_{D\in \mathcal{D}}\left( \sum_{k}\left\Vert
d_{k}\right\Vert ^{2}\right) ^{1/2}\max_{W\in \ext\left( \mathcal{W}_{\wedge
}\right) }\mathbb{E}\left( \sum_{k}\left\Vert \sum_{t,i}W_{tk}\epsilon
_{ti}x_{ti}\right\Vert ^{2}\right) ^{1/2} \\
&\leq &\frac{2\sqrt{K}}{nT}\max_{W\in \ext\left( \mathcal{W}_{\wedge }\right)
}\left( \sum_{t,i}\left( \sum_{k}W_{tk}^{2}\right) \left\Vert
x_{ti}\right\Vert ^{2}\right) ^{1/2} \leq \frac{2}{nT}\sqrt{K\sum_{t,i}\left\Vert x_{ti}\right\Vert ^{2}}=2\sqrt{%
\frac{K~{\rm tr}\left( \hat{C}\left( \mathbf{x}\right) \right) }{nT}},
\end{eqnarray*}%
where $\hat{C}\left( \mathbf{x}\right) $ is the total covariance operator
for all the data accross all tasks. Observe that we used no special
properties of the extreme points, in fact we only used $\left\Vert
W_{t}\right\Vert _{2}\leq 1$ for all $W\in \mathcal{W}$. For the weak
parameter we find 
\begin{eqnarray*}
\mathfrak{W}^{2} &=&\max_{W\in \ext\left( \mathcal{W}_{\wedge }\right)
}\sup_{D\in \mathcal{D}}\frac{1}{nT}\sum_{t,i}\left(
\sum_{k}W_{tk}\left\langle d_{k},x_{ti}\right\rangle \right) ^{2}=\max_{%
\mathbf{\phi \in }\left\{ 1,\dots,K\right\} ^{T}}\sup_{D\in \mathcal{D}}\frac{1%
}{nT}\sum_{t,i}\left\langle d_{\phi _{t}},x_{ti}\right\rangle ^{2} \\
&\leq &\frac{1}{T}\sum_{t}\sup_{\left\Vert d\right\Vert \leq 1}\frac{1}{n}%
\sum_{i}\left\langle d,x_{ti}\right\rangle ^{2}=\frac{1}{T}\sum_{t}\lambda
_{\max }\left( \hat{C}\left( \mathbf{x}_{t}\right) \right) ,
\end{eqnarray*}%
where $\hat{C}\left( \mathbf{x}_{t}\right) $ is the covariance of the data
of task $t$. The bound is%
\begin{equation*}
\mathcal{R}\left( \tciFourier \left( \mathcal{W}_{\wedge }\right) \circ 
\mathcal{G},\mathbf{x}\right) \leq 2\sqrt{\frac{K~{\rm tr}\left( \hat{C}\left( 
\mathbf{x}\right) \right) }{nT}}+8\sqrt{\frac{\left( 1/T\right)
\sum_{t}\lambda _{\max }\left( \hat{C}\left( \mathbf{x}_{t}\right) \right)
\ln \left( 2K\right) }{n}}.
\end{equation*}%
This result has already been announced in \citep{MPR2013}.

\subsection{Dictionary Learning with the Sharing Norm\label{section sharing
norm}}

We reverse the order of summation and maximum in the definition of the
previous norm to obtain the sharing norm%
\begin{equation*}
\left\Vert W\right\Vert _{\vee }=\sum_{k}\max_{t}\left\vert
W_{tk}\right\vert .
\end{equation*}%
This norm (under the name $1/\infty $ norm) has been applied to multitask
learning by various authors \citep[see e.g.][]{Liu 2009}. Statistical
guarantees in the form of oracle inequalities for multivariate regression
have been given by \citet{Negahban 2008}. None of these studies consider
dictionary learning. To apply our method we first observe that the
extreme points of the unit ball $\mathcal{W}_{\vee }\mathcal{=}\left\{
W:\left\Vert W\right\Vert _{\vee }\leq 1\right\} $ are now of the form $%
W_{tk}=v_{t}\delta _{k^{\ast },k}$ for some $\mathbf{v}\in \left\{
-1,1\right\} ^{T}$ and some $k^{\ast }\in \left\{ 1,\dots,K\right\} $. We have 
$\left\vert \ext\left( \mathcal{W}_{\vee }\right) \right\vert =2^{T}K$.

For the strong parameter we find, using (\ref{Trace inequality}),%
\begin{eqnarray*}
\mathfrak{S} &=&\frac{2}{nT}\max_{W\in \ext\left( \mathcal{W}_{\vee }\right) }%
\mathbb{E}\sup_{D\in \mathcal{D}}\sum_{t,i}\epsilon
_{ti}\sum_{k}W_{tk}\left\langle d_{k},x_{ti}\right\rangle =\frac{2}{nT}\max_{%
\mathbf{v},k^{\ast }}\mathbb{E}\sup_{D\in \mathcal{D}}\sum_{t,i}\epsilon
_{ti}v_{t}\left\langle d_{k^{\ast }},x_{ti}\right\rangle \\
&=&\frac{2}{nT}\mathbb{E}\left\Vert \sum_{t,i}\epsilon _{ti}x_{ti}\right\Vert
\leq \sqrt{\frac{{\rm tr}\left( \hat{C}\left( \mathbf{x}\right) \right) }{nT}}.
\end{eqnarray*}%
Here $v_{t}$ disappears in the third identity. It is absorbed by the
Rademacher variables, because the maximization is outside the expectation.
By the same token the supremum over the dictionary becomes a supremum over a
single vector $v$ with $\left\Vert v\right\Vert \leq 1$ which leads to
the norm. For the weak parameter we find%
\begin{eqnarray*}
\mathfrak{W}^{2} &=&\max_{W\in \ext\left( \mathcal{W}_{\wedge }\right)
}\sup_{D\in \mathcal{D}}\frac{1}{nT}\sum_{t,i}\left(
\sum_{k}W_{tk}\left\langle d_{k},x_{ti}\right\rangle \right)
^{2}=\sup_{\left\Vert d\right\Vert \leq 1}\frac{1}{nT}\sum_{t,i}\left\langle
d,x_{ti}\right\rangle ^{2} =\lambda _{\max }\left( \hat{C}\left( \mathbf{x}\right) \right) .
\end{eqnarray*}%
The overall bound is thus%
\begin{equation*}
\mathcal{R}\left( \tciFourier \left( \mathcal{W}_{\wedge }\right) \circ 
\mathcal{G},\mathbf{x}\right) \leq 2\sqrt{\frac{{\rm tr}\left( \hat{C}\left( 
\mathbf{x}\right) \right) }{nT}}+8\sqrt{\frac{\lambda _{\max }\left( \hat{C}%
\left( \mathbf{x}\right) \right) }{n}\left( \ln 2+\frac{\ln K}{T}\right) }.
\end{equation*}%
It depends only very weakly on the number $K$ of dictionary atoms and only
in the second term. Also observe that the weak parameter is never larger
than in case of the sparsity norm $\left\Vert .\right\Vert _{\wedge }$,
because $\hat{C}\left( \mathbf{x}\right) = 1/T \sum_{t}\hat{C}%
\left( \mathbf{x}_{t}\right) $ and $\lambda _{\max }\left( \cdot\right) $ is
convex on the cone of positive semidefinite operators.

A disadvantage of the sharing norm as a penalty is, that it makes strong
assumptions on the relatedness of the tasks in question, and that it is
sensitive to outlier tasks.

\subsection{Subspace Learning\label{section subspace learning}}

The final norm considered is%
\begin{equation*}
\left\Vert W\right\Vert _{S}=\max_{t}\left( \sum_{k}W_{tk}^{2}\right) ^{1/2},
\end{equation*}%
which provides an opportunity to demonstrate our method when the norm on $W$
is not polyhedral. We let $\mathcal{W}_{S}$ be the unit ball in $\left\Vert
.\right\Vert _{S}$ and require the dictionary to be orthonormal. This is the
class of multitask subspace learning \citep{Zhang 2005}, where the effective
weight vectors, $v_{t}=\sum_{k}W_{tk}d_{k}$, are constrained to all lie in a
subspace of dimension $K$ and to have norm bounded by one. We will derive a
bound which compares well with bounds derived from the much more advanced
methods of noncommutative Bernstein inequalities \citep{MP2012b}.

To apply our trick we first construct a finite approximation of $\mathcal{W}%
_{S}$ with the help of covering numbers. Let $\eta >0$. By \citep[][Prop. 5]{Cucker Smale 2001} we can find a subset $\mathcal{W}_{0}\subset 
\mathcal{W}_{S}$ such that $\forall W\in \mathcal{W}_{S}$, $\exists V\in 
\mathcal{W}_{0}$ such that $\left\Vert W-V\right\Vert _{S}\leq \eta $ and $%
\left\vert \mathcal{W}_{0}\right\vert \leq \left( 4/\eta \right) ^{KT}$. For
every $V\in \mathcal{W}_{0}$ let $\mathcal{W}_{V}=\left\{ W\in \mathcal{W}%
_{S}:\left\Vert W-V\right\Vert _{S}\leq \eta \right\} ,$ so that 
\begin{equation*}
\mathcal{W}_{S}=\bigcup_{V\in \mathcal{W}_{0}}\mathcal{W}_{V}\text{.}
\end{equation*}%
We apply Lemma \ref{Lemma Main}. By orthonormality of the dictionary the
weak parameter is 
\begin{equation*}
\mathfrak{W}^{2}=\max_{W\in \mathcal{W}_{S}}\sup_{D\in \mathcal{D}}\frac{1}{%
nT}\sum_{t,i}\left\langle \sum_{k}W_{tk}d_{k},x_{ti}\right\rangle
^{2}=\max_{\left\Vert v\right\Vert \leq 1}\frac{1}{nT}\sum_{t,i}\left\langle
v,x_{ti}\right\rangle ^{2}=\lambda _{\max }\left( \hat{C}\left( \mathbf{x}%
\right) \right) .
\end{equation*}%
The strong parameter can be bounded by two terms,%
\begin{eqnarray*}
\mathfrak{S} &=&\frac{2}{nT}\max_{V\in \mathcal{W}_{0}}\mathbb{E}\sup_{W\in 
\mathcal{W}_{V}}\sup_{D\in \mathcal{D}}\sum_{t,i}\epsilon
_{ti}\sum_{k}W_{tk}\left\langle d_{k},x_{ti}\right\rangle \\
&\leq &\frac{2}{nT}\max_{V\in \mathcal{W}_{0}}\mathbb{E}\sup_{D\in \mathcal{D%
}}\sum_{t,i}\epsilon _{ti}\sum_{k}V_{tk}\left\langle
d_{k},x_{ti}\right\rangle +\frac{2}{nT}\mathbb{E}\sup_{\left\Vert
W\right\Vert _{S}<\eta }\sup_{D\in \mathcal{D}}\sum_{t,i}\epsilon
_{ti}\sum_{k}W_{tk}\left\langle d_{k},x_{ti}\right\rangle .
\end{eqnarray*}%
The first term is bounded by $2\sqrt{K~{\rm tr}\left( \hat{C}\left( \mathbf{x}%
\right) \right) /\left( nT\right) }$ exactly as in Section \ref{Section
sparsity norm}. For the second term we again use orthonormality of the
dictionary 
\begin{align*}
& \frac{2}{nT}\mathbb{E}\sup_{\left\Vert W\right\Vert _{S}<\eta
}\sup_{D\in \mathcal{D}}\sum_{t}\left\langle
\sum_{k}W_{tk}d_{k},\sum_{i}\epsilon _{ti}x_{ti}\right\rangle \\
& =\frac{2}{nT}\mathbb{E}\sup_{D\in \mathcal{D}}\sum_{t}\sup_{\left\Vert
w\right\Vert \leq \eta }\left\langle \sum_{k}w_{k}d_{k},\sum_{i}\epsilon
_{ti}x_{ti}\right\rangle =\frac{2\eta }{nT}\sum_{t}\mathbb{E}\left\Vert
\sum_{i}\epsilon _{ti}x_{ti}\right\Vert \\
& \leq \frac{2\eta }{T}\sum_{t}\sqrt{\frac{{\rm tr}\left( \hat{C}\left( \mathbf{x}%
_{t}\right) \right) }{n}}\leq 2\eta \sqrt{\frac{\left( 1/T\right)
\sum_{t}{\rm tr}\left( \hat{C}\left( \mathbf{x}_{t}\right) \right) }{n}} =2\eta \sqrt{\frac{{\rm tr}\left( \hat{C}\left( \mathbf{x}\right) \right) }{n}},
\end{align*}%
where we used (\ref{Trace inequality}) and Jensen's inequality. Putting
everything together and taking the infimum over $\eta $ we get the bound%
\begin{eqnarray*}
\mathcal{R}\left( \tciFourier \left( \mathcal{W}_{S}\right) \circ 
\mathcal{G},\mathbf{x}\right) &\leq &2\sqrt{\frac{K~{\rm tr}\left( \hat{C}\left( 
\mathbf{x}\right) \right) }{nT}}+ \\
&&+\inf_{\eta >0}\left( 2\eta \sqrt{\frac{{\rm tr}\left( \hat{C}\left( \mathbf{x}%
\right) \right) }{n}}+8\sqrt{\frac{K\lambda _{\max }\left( \hat{C}\left( 
\mathbf{x}\right) \right) \ln \left( 4/\eta \right) }{n}}\right) .
\end{eqnarray*}%
If $\HH=%
\mathbb{R}
^{d}$ we may for example set $\eta =\sqrt{K/d}$ to obtain %
\begin{equation*}
\mathcal{R}\left( \tciFourier \left( \mathcal{W}_{S }\right) \circ 
\mathcal{G},\mathbf{x}\right) \leq 2\sqrt{\frac{K~{\rm tr}\left( \hat{C}\left( 
\mathbf{x}\right) \right) }{nT}}+8\sqrt{\frac{K\lambda _{\max }\left( \hat{C}%
\left( \mathbf{x}\right) \right) \ln \left( 16d/K\right) }{n}}.
\end{equation*}%

This can be compared to the bound derived from the results on trace norm
regularization in \citep{MP2012b}. The present bound gives a faster approach
to the limit as $T\rightarrow \infty $, but a larger limit value.

\subsection{The limit $T\rightarrow \infty $ in High Dimensions}

If $X_{1},\dots,X_{n}$ are sampled iid with $\| X_{i}\| \leq 1
$ then \citep[see e.g.][Theorem 7]{MP2012b}
$$
\mathbb{E}\sqrt{\mathbb{\lambda }_{\max }\left( \hat{C}\left( 
\mathbf{X}\right) \right) }\leq \sqrt{\lambda _{\max }\left( C\left(
X_{1}\right) \right) }+4
\sqrt{\frac{\ln \min(\dim(\HH),n) +1}{n}}.
$$ 
Here $C\left( X_{1}\right) $ is the true covariance $C\left(
X_{1}\right) =\mathbb{E}\hat{C}\left( X_{1}\right) $. This allows to
re-express our results in terms of expected Rademacher complexities, for
which bounds as in Theorem 1 exist \citep{Bartlett 2002}.

Now consider multitask dictionary learning as in the last two examples, with
data sampled from the uniform distribution on the unit sphere $\mathcal{S}%
^{d-1}$ in $%
\mathbb{R}
^{d}$. Even if our multitask model is appropriate, to achieve empirical
error $\eta $ we will likely need to work with margins of order $\eta /\sqrt{%
d}$ which incurs a Lipschitz constant of $\sqrt{d}/\eta $. On the other hand 
$C\left( X_{1}\right) $ has trace $1$ and largest eigenvalue $1/d$. Thus, for fixed $n$, as $T\rightarrow \infty$,
\[
\mathbb{E}\sqrt{\mathbb{\lambda }_{\max }\left( \hat{C}\left( \mathbf{X}%
\right) \right) }\rightarrow \sqrt{\frac{1}{d}}\]%
which cancels the contribution of the Lipschitz constant. Applied to the
bound in Section \ref{section sharing norm}, the ambient dimension disappears
completely in this limit. For subspace learning as in the previous section
it appears only in the logarithm. This unveils a mechanism how multitask
learning can potentially overcome the curse of high dimensionality.

\subsection*{Acknowledgments}

This work was supported in part by EPSRC Grant EP/H027203/1 and Royal Society International Joint Project 2012/R2.

\newpage

\section{Appendix}

For the reader's convenience we provide a self-contained proof of Theorem %
\ref{Lemma concentration of norm} with all the required intermediate
results. Most of the material of this appendix can also be found in the book 
by \citet{Boucheron 2013}.

\subsection{Concentration Inequalities and Proof of Theorem \protect\ref%
{Lemma concentration of norm}}

Let $\left( \Omega ,\Sigma ,\mu \right) =\prod_{i=1}^{n}\left( \Omega
_{i},\Sigma _{i},\mu _{i}\right) $ be a product of probability spaces. For
an event $\mathcal{E}\in \Sigma $ we write $\Pr \left( \mathcal{E}\right)
=\mu \left( \mathcal{E}\right) $. We denote a generic member of $\Omega $ by 
$\mathbf{x}=\left( x_{1},\dots,x_{n}\right) $. For $\mathbf{x}\in \Omega $, $%
1\leq k\leq n$ and $y\in \Omega $ we use $\mathbf{x}_{k\leftarrow y}$ to
denote the object obtained from $\mathbf{x}$ by replacing the $k$-th
coordinate of $\mathbf{x}$ with $y$. That is 
\begin{equation*}
\mathbf{x}_{k\leftarrow y}=\left( x_{1},\dots ,x_{k-1},y,x_{k+1},\dots
,x_{n}\right) \text{.}
\end{equation*}%
For $g\in L_{\infty }\left[ \mu \right] $ we write $\mathbb{E}g$ for $%
\int_{\Omega }g\left( \mathbf{x}\right) d\mu \left( \mathbf{x}\right) $, and
for $k\in \left\{ 1,\dots,n\right\} $ we introduce the functions $\mathbb{E}%
_{k}\left[ g\right] $, $\inf_{k}\left[ g\right] $ and $\sup_{k}\left[ g%
\right] $ by 
\begin{gather*}
\mathbb{E}_{k}\left[ g\right] \left( \mathbf{x}\right) =\int_{\Omega
_{k}}g\left( \mathbf{x}_{k\leftarrow y}\right) d\mu _{k}\left( y\right) , \\
\inf_{k}g=\inf_{y\in \Omega }g\left( \mathbf{x}_{k\leftarrow y}\right) \text{
and }\sup_{k}g=\sup_{y\in \Omega }g\left( \mathbf{x}_{k\leftarrow y}\right) ,
\end{gather*}%
where $\inf $ and $\sup $ on the r.h.s. are essential infima and suprema.
The functions $\mathbb{E}_{k}\left[ g\right] $, $\inf_{k}\left[ g\right] $
and $\sup_{k}\left[ g\right] $ are in $L_{\infty }\left[ \mu \right] $ and
do depend on $\mathbf{x}$ but not on $x_{k}$. Note that $\mathbb{E}_{k}\left[
g\right] $ correponds to the expectation conditional to all variables except 
$x_{k}$. We use $\left\Vert \cdot \right\Vert _{\infty }$ to denote the norm in $%
L_{\infty }\left[ \mu \right] $.

We will use and establish the following concentration inequalities:

\begin{theorem}
\label{Theorem Concentration}Let $F\in L_{\infty }\left[ \mu \right] $ and
define functionals $A$ and $B$ by%
\begin{eqnarray*}
A^{2}\left( F\right)  &=&\left\Vert \sum_{k=1}^{n}\left(
\sup_{k}F-\inf_{k}F\right) ^{2}\right\Vert _{\infty } \\
B^{2}\left( F\right)  &=&\left\Vert \sum_{k=1}^{n}\left( F-\inf_{k}F\right)
^{2}\right\Vert _{\infty }
\end{eqnarray*}%
Then for any $s>0$
\begin{eqnarray}
\nonumber
&{\rm (i)}&~\Pr \left\{ F>{{\mathbb{E}}}F+s\right\} \leq e^{-2s^{2}/A^{2}}\\
&{\rm (ii)}&~\Pr \left\{ F>{{\mathbb{E}}}F+s\right\} \leq e^{-s^{2}/\left(
2B^{2}\right) }.\nonumber
\end{eqnarray}
\end{theorem}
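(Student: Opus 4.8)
I would prove both inequalities by the entropy (Herbst) method. Both are Chernoff bounds: for $\lambda>0$ Markov's inequality gives $\Pr\{F>\mathbb{E}F+s\}\le e^{-\lambda(\mathbb{E}F+s)}\,\mathbb{E}\,e^{\lambda F}$, so it is enough to bound the cumulant generating function $\psi(\lambda):=\log\mathbb{E}\,e^{\lambda F}$ from above. The engine is the identity $\mathrm{Ent}(e^{\lambda F})=(\lambda\psi'(\lambda)-\psi(\lambda))\,\mathbb{E}\,e^{\lambda F}$, where $\mathrm{Ent}(Y)=\mathbb{E}[Y\log Y]-\mathbb{E}Y\log\mathbb{E}Y$: once we show a bound of the shape $\mathrm{Ent}(e^{\lambda F})\le v\,\lambda^{2}\,\mathbb{E}\,e^{\lambda F}$, this reads $\lambda\psi'(\lambda)-\psi(\lambda)\le v\lambda^{2}$, i.e.\ $(\psi(\lambda)/\lambda)'\le v$, which on integrating from $0$ (using $\psi(\lambda)/\lambda\to\mathbb{E}F$) gives $\psi(\lambda)\le\lambda\,\mathbb{E}F+v\lambda^{2}$ and hence, after optimising $\lambda$ in the Chernoff bound, $\Pr\{F>\mathbb{E}F+s\}\le e^{-s^{2}/(4v)}$. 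The whole proof thus reduces to establishing $\mathrm{Ent}(e^{\lambda F})\le(A^{2}/8)\lambda^{2}\,\mathbb{E}\,e^{\lambda F}$ for (i) and $\mathrm{Ent}(e^{\lambda F})\le(B^{2}/2)\lambda^{2}\,\mathbb{E}\,e^{\lambda F}$ for (ii).

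\noindent\textbf{Tensorization and single-variable estimates.} Both bounds on $\mathrm{Ent}(e^{\lambda F})$ will come from the sub-additivity of entropy, $\mathrm{Ent}(e^{\lambda F})\le\sum_{k=1}^{n}\mathbb{E}\,\mathrm{Ent}_{k}(e^{\lambda F})$, where $\mathrm{Ent}_{k}$ acts on the single variable $x_{k}$ conditionally on the rest, combined with a pointwise estimate on each conditional entropy. For (i) the target is $\mathrm{Ent}_{k}(e^{\lambda F})\le\tfrac{\lambda^{2}}{8}(\sup_{k}F-\inf_{k}F)^{2}\,\mathbb{E}_{k}e^{\lambda F}$; for (ii) it is $\mathrm{Ent}_{k}(e^{\lambda F})\le\tfrac{\lambda^{2}}{2}(F-\inf_{k}F)^{2}\,\mathbb{E}_{k}e^{\lambda F}$. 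Granting these, one sums over $k$, takes the outer expectation (the conditional expectations collapse by the tower property), and uses $\sum_{k}(\sup_{k}F-\inf_{k}F)^{2}\le A^{2}$ resp.\ $\sum_{k}(F-\inf_{k}F)^{2}\le B^{2}$ almost surely to land on the two displays of the previous paragraph.

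\noindent\textbf{The two single-variable estimates.} For (i), treating $F$ as a function of $x_{k}$ only and writing $\rho(t)=\log\mathbb{E}_{k}e^{tF}$, one has $\mathrm{Ent}_{k}(e^{\lambda F})=(\lambda\rho'(\lambda)-\rho(\lambda))\,\mathbb{E}_{k}e^{\lambda F}$ with $\lambda\rho'(\lambda)-\rho(\lambda)=\int_{0}^{\lambda}t\,\rho''(t)\,dt$, and $\rho''(t)$ is the variance of $F$ under the tilted law $e^{tF}\,d\mu_{k}/\mathbb{E}_{k}e^{tF}$, hence at most $\tfrac14(\sup_{k}F-\inf_{k}F)^{2}$; this yields the factor $\tfrac18$ and is just Hoeffding's lemma applied conditionally. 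For (ii), the one-sided estimate, I would instead use the variational bound $\mathrm{Ent}(Y)\le\mathbb{E}[Y\log Y-Y\log u-Y+u]$, valid for every $u>0$, with $u=e^{\lambda\inf_{k}F}$ — a choice that is admissible because $\inf_{k}F$ does not depend on $x_{k}$. This produces $\mathrm{Ent}_{k}(e^{\lambda F})\le\mathbb{E}_{k}[e^{\lambda F}\phi(-\lambda(F-\inf_{k}F))]$ with $\phi(u)=e^{u}-u-1$, and since $F-\inf_{k}F\ge0$ and $e^{-u}+u-1\le u^{2}/2$ for $u\ge0$, the right-hand side is at most $\tfrac{\lambda^{2}}{2}\mathbb{E}_{k}[(F-\inf_{k}F)^{2}e^{\lambda F}]$, as required.

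\noindent\textbf{Main obstacle.} The one genuinely substantial ingredient is the sub-additivity of entropy; for a self-contained appendix it must itself be proved, e.g.\ from the duality formula $\mathrm{Ent}(Y)=\sup\{\mathbb{E}[YU]:\mathbb{E}e^{U}\le1\}$ by peeling off one coordinate at a time. Everything else is elementary once that is in place, and the asymmetry of the two conclusions is exactly the point: part (ii) trades the full oscillation $\sup_{k}F-\inf_{k}F$ for the smaller one-sided quantity $F-\inf_{k}F$ at the cost of the weaker constant $\tfrac12$ forced by the one-sided inequality $e^{-u}+u-1\le u^{2}/2$, whereas the symmetric, second-order argument available for (i) gives the sharper $\tfrac18$. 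The remaining care points are that $\inf_{k}F$ really is $x_{k}$-free (so that it may be inserted as a reference point inside $\mathbb{E}_{k}$) and that $F\in L_{\infty}[\mu]$, which makes all the cumulant generating functions smooth and finite so the calculus steps are legitimate.
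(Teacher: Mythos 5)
Your proposal is correct, and for part (ii) it takes a genuinely different route from the paper, while part (i) is essentially the paper's argument in different notation: the paper's free energy $H(\beta)=\frac1\beta\ln\mathbb{E}e^{\beta F}$ and its bound $H'\leq v$ integrated from $0$ are exactly your Herbst step $(\psi(\lambda)/\lambda)'\leq v$, its tensorization theorem is your sub-additivity of entropy (proved there by a telescoping product and Jensen rather than by duality, an immaterial difference), and for (i) its bound $\sigma_{k,sF}^{2}[F]\leq\frac14(\sup_kF-\inf_kF)^{2}$ followed by the double integral $\int_0^\beta\int_t^\beta ds\,dt=\beta^2/2$ is the same computation as your $\int_0^\lambda t\rho''(t)\,dt$ with Popoviciu's variance bound. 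The real divergence is in (ii): the paper bounds the conditional thermal variance by the one-sided second moment $\mathbb{E}_{k,sF}[(F-\inf_kF)^2]$ and then needs a monotonicity claim — that this quantity is nondecreasing in $s$, proved via the association (Chebyshev) inequality $\mathbb{E}[\xi(h)h]\geq\mathbb{E}[\xi(h)]\,\mathbb{E}[h]$ for nondecreasing $\xi$ — so that the integrand in the double integral can be pushed up to its value at $s=\beta$, yielding $H'(\beta)\leq\frac12\mathbb{E}_{\beta F}\bigl[\sum_k(F-\inf_kF)^2\bigr]\leq B^2/2$. You instead bound the conditional entropy directly through the variational inequality $\mathrm{Ent}(Y)\leq\mathbb{E}[Y\log Y-Y\log u-Y+u]$ with the admissible ($x_k$-free) reference $u=e^{\lambda\inf_kF}$ and the elementary bound $e^{-w}+w-1\leq w^2/2$ for $w\geq0$; this is the standard "modified log-Sobolev" shortcut and it dispenses entirely with the association inequality and the double-integral bookkeeping, at no cost in the constant. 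What the paper's longer route buys is its reusable intermediate result (the general entropy bound in terms of conditional thermal variances, Theorem \ref{Theorem general concentration}), which it also feeds into the Gaussian inequality; your route is shorter and more self-contained for Theorem \ref{Theorem Concentration} itself. Your care points are the right ones ($\inf_kF$ independent of $x_k$, boundedness of $F$ justifying the calculus, and $\lambda>0$ so that $w=\lambda(F-\inf_kF)\geq0$), and the constants check out: $v=A^2/8$ gives $e^{-2s^2/A^2}$ and $v=B^2/2$ gives $e^{-s^2/(2B^2)}$ after optimizing the Chernoff bound.
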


Part (i) is given in \citep{McDiarmid 1998}. The inequality (ii) appears in
different forms in various places \cite{Boucheron 2003,Ledoux 2001}. 
The constant $2$ in the exponent appears first in \citep{Maurer 2006}. We will use part (i) of the theorem to prove the following
Gaussian concentration inequality, also known as the
Tsirelson-Ibragimov-Sudakov inequality \citep{Ledoux 1991,Boucheron
2013}.

\begin{theorem}
\label{Theorem Gaussian Lipschitz}Let $F:\mathcal{%
\mathbb{R}
}^{n}\rightarrow 
\mathbb{R}
$ be Lipschitz with Lipschitz constant $L$ and let $\mathbf{X}=\left(
X_{1},\dots ,X_{n}\right) $ be a vector of independent random variables $%
X_{i}\sim \mathcal{N}\left( 0,1\right) $. Then for any $s>0$%
\begin{equation*}
\Pr \left\{ F>{{\mathbb{E}}}F+s\right\} \leq e^{-s^{2}/\left( 2L^{2}\right)
}.
\end{equation*}
\end{theorem}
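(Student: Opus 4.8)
The plan is to pass from the tail bound to a sub-Gaussian estimate on the moment generating function of $F(\mathbf{X})$ and to obtain that estimate by a discretisation that puts us in the regime of Theorem~\ref{Theorem Concentration}. First the reduction: it suffices to prove $\mathbb{E}\exp(\lambda(F(\mathbf{X})-\mathbb{E}F(\mathbf{X})))\le\exp(\lambda^{2}L^{2}/2)$ for every $\lambda>0$, since then Markov's inequality applied to $\exp(\lambda(F-\mathbb{E}F))$ together with the choice $\lambda=s/L^{2}$ gives exactly $\Pr\{F>\mathbb{E}F+s\}\le\exp(-s^{2}/(2L^{2}))$. So everything comes down to the exponential moment bound.

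For that bound I would discretise the Gaussian and pass to the limit. Fix $N$, let $(\epsilon_{ij})$, $1\le i\le n$, $1\le j\le N$, be independent Rademacher variables, set $S_{i}^{(N)}=N^{-1/2}\sum_{j}\epsilon_{ij}$ and $G_{N}=F(S_{1}^{(N)},\dots,S_{n}^{(N)})$. By the central limit theorem $(S_{1}^{(N)},\dots,S_{n}^{(N)})$ converges in law to $\mathbf{X}$, and since $F$ is Lipschitz, hence of at most linear growth, both $\mathbb{E}G_{N}\to\mathbb{E}F(\mathbf{X})$ and $\mathbb{E}\exp(\lambda(G_{N}-\mathbb{E}G_{N}))\to\mathbb{E}\exp(\lambda(F(\mathbf{X})-\mathbb{E}F(\mathbf{X})))$. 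Now $G_{N}$ is a function of $nN$ independent signs, flipping a single $\epsilon_{ij}$ moves the $i$-th argument of $F$ by $2/\sqrt{N}$ and hence changes $G_{N}$ by at most $2L/\sqrt{N}$, so Theorem~\ref{Theorem Concentration}(i) applies; its proof, via the martingale decomposition and Hoeffding's lemma, in fact yields the matching sub-Gaussian bound on $\mathbb{E}\exp(\lambda(G_{N}-\mathbb{E}G_{N}))$, and letting $N\to\infty$ transfers it to $F(\mathbf{X})$.

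The crux, and where I expect the real work to be, is making the constant $L^{2}$ rather than $nL^{2}$: summing $(2L/\sqrt{N})^{2}$ over all $nN$ signs gives $4nL^{2}$, which only yields $\exp(-s^{2}/(2nL^{2}))$, and this is genuinely the best one gets from the $\sup$-type quantity $A^{2}$ alone (already for $F=\max$). To recover the dimension-free constant one must replace $\sum_{i}\sup|\partial_{i}F|^{2}$ by $\|\nabla F\|^{2}=\sum_{i}(\partial_{i}F)^{2}\le L^{2}$, i.e.\ process the $n$ coordinate blocks $(\epsilon_{i\cdot})$ sequentially, reduce each step to a one-dimensional centred Lipschitz function of $S_{i}^{(N)}$, and let the per-block contributions aggregate in mean square. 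Making this quantitative cleanly is the content of Gross's argument: the entropic (two-point, bounded-differences flavoured) inequality on $\{-1,1\}$, tensorised over the $nN$ signs and pushed through the central limit theorem, gives the Gaussian logarithmic Sobolev inequality $\mathrm{Ent}_{\gamma}(f^{2})\le 2\,\mathbb{E}_{\gamma}\|\nabla f\|^{2}$; applying it to $f=\exp(\lambda F/2)$ and solving the resulting differential inequality for $\lambda\mapsto\lambda^{-1}\log\mathbb{E}\exp(\lambda F)$ (the Herbst argument) yields $\mathbb{E}\exp(\lambda(F-\mathbb{E}F))\le\exp(\lambda^{2}L^{2}/2)$ directly from $\|\nabla F\|\le L$, and the reduction of the first paragraph then closes the proof.
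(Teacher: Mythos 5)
Your reduction to a sub-Gaussian moment generating function bound is fine, and you have correctly located the crux: the dimension-free constant must come from $\sum_{i}\left( \partial _{i}F\right) ^{2}\leq L^{2}$ rather than from bounding each coordinate's oscillation by the global Lipschitz constant. But the conclusion you draw from this --- that Theorem \ref{Theorem Concentration}(i) cannot deliver the constant $L^{2}$, so that one must import the two-point log-Sobolev inequality, its tensorization through the CLT, and the Herbst argument --- is a misdiagnosis, and it is exactly here that the paper's proof does something you have missed. The paper first smooths: by convolution with Gaussian kernels of shrinking width one may assume $F\in C^{\infty }$ with $\left\vert \left( \partial ^{2}/\partial x_{i}^{2}\right) F\right\vert \leq B$ for some finite (possibly huge) $B$. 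Then, at any \emph{fixed} sign configuration, flipping the single sign $\epsilon _{ik}$ moves the $i$-th argument by $\pm 2/\sqrt{K}$, and a second-order Taylor expansion bounds the squared oscillation by $4\left( \partial _{i}F(\mathbf{X}^{(K)})\right) ^{2}/K+8LB/K^{3/2}+4B^{2}/K^{2}$. Summing over all $nK$ signs at that one configuration and only then taking the supremum gives $A^{2}(G)\leq 4\sup_{x}\left\Vert \nabla F\left( x\right) \right\Vert ^{2}+8nLB/\sqrt{K}+4nB^{2}/K\leq 4L^{2}+o_{K}(1)$: because $A^{2}$ is the supremum of the whole sum, not the sum of coordinatewise suprema, the pointwise gradient bound $\left\Vert \nabla F\right\Vert \leq L$ enters directly, Theorem \ref{Theorem Concentration}(i) yields $\Pr \left\{ F(\mathbf{X}^{(K)})>\EE F(\mathbf{X}^{(K)})+s\right\} \leq \exp \left( -s^{2}/\left( 2L^{2}+o_{K}(1)\right) \right) $, and the CLT finishes the proof. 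Your counterexample $F=\max $ does not refute this, since the max is not smooth; after smoothing, its gradient at a near-tied point has small norm, and the order of limits (smooth first with $B$ fixed, then $K\rightarrow \infty $) is precisely what makes the argument work.

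The alternative you sketch (Gross's two-point inequality tensorized and pushed through the CLT to get the Gaussian log-Sobolev inequality, then Herbst) is of course a classically correct route, but as written it is a pointer rather than a proof: the two-point inequality with the right constant, its stability under the CLT limit, and the solution of the differential inequality for $\lambda ^{-1}\ln \EE e^{\lambda F}$ are exactly the nontrivial content, and none of it is established here or elsewhere in the paper, whose appendix is meant to be self-contained on the basis of Theorem \ref{Theorem Concentration}. Note also that your middle paragraph, which applies Theorem \ref{Theorem Concentration}(i) with per-sign oscillation $2L/\sqrt{N}$, only yields the constant $nL^{2}$ (as you acknowledge), so in your proposal the entire burden rests on the unproved log-Sobolev step; the gap is therefore real, and it is closed in the paper by the smoothing-plus-Taylor device described above rather than by any log-Sobolev machinery.
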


Before proving these results we show how they can be used to obtain Theorem %
\ref{Lemma concentration of norm}.

\vspace{.3truecm}
\begin{proof}{\bf of Theorem \protect\ref{Lemma concentration of norm}.} We first consider the bounded case and denote $F\left( \mathbf{\epsilon }%
\right) =\sup_{\mathbf{z}\in A}~\left\langle \mathbf{\epsilon },\mathbf{z}%
\right\rangle $. For any given \textbf{$\epsilon $} let $\mathbf{z}\left( 
\mathbf{\epsilon }\right) \in A$ denote a corresponding maximizer in the
definition of $F$, so that $F\left( \mathbf{\epsilon }\right) =\left\langle 
\mathbf{\epsilon },z\left( \mathbf{\epsilon }\right) \right\rangle $. Now fix a configuration \textbf{$\epsilon $}. For any $j\in \left\{
1,\dots,n\right\} $ and $\eta \in \left[ -1,1\right] $ recall that \textbf{$%
\epsilon $}$_{j\leftarrow \eta }$ denotes the configuration \textbf{$%
\epsilon $} with $\epsilon _{j}$ replaced by $\eta $. Then for given $j$ and 
$\eta ^{\ast }$ minimizing $F\left( \mathbf{\epsilon }_{j\leftarrow \eta
}\right) $ we have%
\begin{eqnarray*}
F\left( \mathbf{\epsilon }\right) -\inf_{\eta \in \left[ -1,1\right]
}F\left( \mathbf{\epsilon }_{j\leftarrow \eta }\right)  &=&\left\langle 
\mathbf{\epsilon },z\left( \mathbf{\epsilon }\right) \right\rangle
-\left\langle \mathbf{\epsilon }_{j\leftarrow \eta ^{\ast }},z\left( \mathbf{%
\epsilon }_{j\leftarrow \eta ^{\ast }}\right) \right\rangle  \\
&\leq &\left\langle \mathbf{\epsilon },z\left( \mathbf{\epsilon }\right)
\right\rangle -\left\langle \mathbf{\epsilon }_{j\leftarrow \eta ^{\ast
}},z\left( \mathbf{\epsilon }\right) \right\rangle  =\left( \epsilon _{j}-\eta ^{\ast }\right) \mathbf{z}\left( \mathbf{%
\epsilon }\right) _{j}\leq 2\left\vert \mathbf{z}\left( \mathbf{\epsilon }%
\right) _{j}\right\vert .
\end{eqnarray*}%
It follows that 
\begin{equation*}
\sum_{j}\left( F\left( \mathbf{\epsilon }\right) -\inf_{\eta \in \left[ -1,1%
\right] }F\left( \mathbf{\epsilon }_{j\leftarrow \epsilon }\right) \right)
^{2}\leq 4\left\Vert \mathbf{z}\left( \mathbf{\epsilon }\right) \right\Vert
^{2}\leq 4\sup_{\mathbf{z}\in A}\left\Vert \mathbf{z}\right\Vert ^{2}
\end{equation*}%
and the conclusion follows from Theorem \ref{Theorem Concentration} (ii).

For the normal case observe that the function $\mathbf{x}\in 
\mathbb{R}
^{n}\mapsto \sup_{\mathbf{z}\in A}~\left\langle \mathbf{x},\mathbf{z}%
\right\rangle $ has Lipschitz constant $\sup_{\mathbf{z}\in A}\left\Vert 
\mathbf{z}\right\Vert $ and use Theorem \ref{Theorem Gaussian Lipschitz}%
. 
\end{proof}

\subsection{A General Concentration Result}

The proof of Theorems \ref{Theorem Concentration} and \ref{Theorem Gaussian
Lipschitz} is based on the entropy method \citep{Ledoux 2001,Boucheron 2003,Boucheron 2013}. We first establish the following
subadditivity property of entropy \citep{Ledoux 2001}.

\begin{theorem}
\label{Lemma Tensorization Inequality}Suppose $g:\Omega \rightarrow \mathbb{R%
}$ is positive. Then%
\begin{equation}
\mathbb{E}\left[ g\ln g\right] -\mathbb{E}\left[ g\right] \ln \mathbb{E}%
\left[ g\right] \leq \mathbb{E}\left[ \sum_{k=1}^{n}\left( \mathbb{E}_{k}%
\left[ g\ln g\right] -\mathbb{E}_{k}\left[ g\right] \ln \mathbb{E}_{k}\left[
g\right] \right) \right] .  \label{Tensorization Inequality}
\end{equation}
\end{theorem}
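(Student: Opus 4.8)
The plan is to deduce the tensorization inequality from the variational (dual) description of entropy, combined with a telescoping decomposition. Write $\mathrm{Ent}(g)=\mathbb{E}[g\ln g]-\mathbb{E}[g]\ln\mathbb{E}[g]$ and, for each $k$, $\mathrm{Ent}_k(g)=\mathbb{E}_k[g\ln g]-\mathbb{E}_k[g]\ln\mathbb{E}_k[g]$ for the entropy in the $k$-th coordinate alone, so that (\ref{Tensorization Inequality}) reads $\mathrm{Ent}(g)\le\mathbb{E}[\sum_{k=1}^n\mathrm{Ent}_k(g)]$. The one ingredient I would isolate as a lemma is the following one-sided dual bound, valid on an arbitrary probability space: if $h$ is measurable with $\int e^{h}\le 1$, then $\int g h\le\mathrm{Ent}(g)$. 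Since both sides are homogeneous of degree one in $g$, it suffices to prove this when $\int g=1$ (the case $\int g=0$ being trivial); then integrating the elementary pointwise inequality $ab\le a\ln a-a+e^{b}$ (valid for $a>0$, $b\in\mathbb{R}$, with equality at $b=\ln a$, since it reduces to $e^{t}\ge t+1$) with $a=g$, $b=h$ gives $\int g h\le\int g\ln g-1+\int e^{h}\le\int g\ln g=\mathrm{Ent}(g)$.

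Next I would build the telescoping decomposition. For $0\le k\le n$ let $g_k$ be the average of $g$ over the coordinates $x_{k+1},\dots,x_n$, so that $g_0=\mathbb{E}[g]$, $g_n=g$, and $g_k$ is a function of $x_1,\dots,x_k$ only. Set $h:=\ln g-\ln\mathbb{E}[g]$ and $h_k:=\ln g_k-\ln g_{k-1}$ for $1\le k\le n$, so that $\sum_{k=1}^n h_k=\ln g_n-\ln g_0=h$ telescopes. Two observations are then needed: $g_{k-1}$, being an average over $x_k,\dots,x_n$, does not depend on $x_k$; and averaging $g_k$ over $x_k$ returns $g_{k-1}$ (Fubini: averaging $g$ first over $x_{k+1},\dots,x_n$ and then over $x_k$ is the same as averaging over $x_k,\dots,x_n$). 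Hence $\mathbb{E}_k[e^{h_k}]=\mathbb{E}_k[g_k/g_{k-1}]=\mathbb{E}_k[g_k]/g_{k-1}=1$, so for each frozen value of the remaining coordinates the function $h_k$ meets the hypothesis of the dual bound on the $k$-th coordinate space.

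Combining the pieces, and using the tower property $\mathbb{E}=\mathbb{E}[\mathbb{E}_k[\cdot]]$ together with $g h=\sum_k g h_k$,
\[
\mathrm{Ent}(g)=\mathbb{E}[g h]=\sum_{k=1}^n\mathbb{E}\big[\mathbb{E}_k[g h_k]\big]\le\sum_{k=1}^n\mathbb{E}\big[\mathrm{Ent}_k(g)\big],
\]
where the first equality is the defining identity $\mathrm{Ent}(g)=\mathbb{E}[g\ln g]-\mathbb{E}[g]\ln\mathbb{E}[g]$ evaluated at the chosen $h$, and the last step applies the lemma to $h_k$ in the $k$-th coordinate. This is exactly (\ref{Tensorization Inequality}).

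I expect the main obstacle to be getting the normalization right so that the dual bound applies to the \emph{conditional} entropies $\mathrm{Ent}_k(g)$, which are not themselves normalized; this is precisely what the degree-one homogeneity of the lemma (used separately for each frozen configuration of the other coordinates) takes care of, and it is why the exact identity $\mathbb{E}_k[e^{h_k}]=1$, rather than merely $\le 1$, is convenient. The remaining points are routine: the Fubini interchange giving $\mathbb{E}_k[g_k]=g_{k-1}$, and integrability bookkeeping (one may first assume $g$ is bounded above and bounded below by a positive constant, and then recover the general positive case by truncation and monotone convergence). An alternative, more hands-on route would be induction on $n$ starting from the two-factor subadditivity of entropy, but the variational argument above seems the most economical.
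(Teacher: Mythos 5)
Your proof is correct, and it shares the paper's overall skeleton while replacing its key one-dimensional ingredient. Both arguments telescope along the Doob martingale of partial conditional expectations: the paper writes $g/\mathbb{E}[g]$ as the product $\prod_{k}\mathbb{E}_1\cdots\mathbb{E}_{k-1}[g]\big/\mathbb{E}_1\cdots\mathbb{E}_{k}[g]$ (averaging out the first coordinates), whereas you write $\ln g-\ln\mathbb{E}[g]=\sum_k(\ln g_k-\ln g_{k-1})$ with $g_k$ the average over the last coordinates---the same decomposition up to taking logarithms and reversing the index order. Where you genuinely differ is in the lemma applied in each coordinate: the paper uses Lemma \ref{Lemma Convexity of KL divergence}, i.e.\ $\mathbb{E}[h]\ln\left(\mathbb{E}[h]/\mathbb{E}[g]\right)\leq\mathbb{E}\left[h\ln (h/g)\right]$, proved by Jensen's inequality for $t\mapsto t\ln t$ under the $g$-tilted expectation, while you use the easy half of the variational (duality) formula for entropy, namely $\mathbb{E}[gh]\leq\mathbb{E}[g\ln g]-\mathbb{E}[g]\ln\mathbb{E}[g]$ whenever $\mathbb{E}[e^{h}]\leq 1$, proved from the pointwise inequality $ab\leq a\ln a-a+e^{b}$ together with degree-one homogeneity in $g$; the hypothesis is met exactly, $\mathbb{E}_k[e^{h_k}]=1$, because $\mathbb{E}_k[g_k]=g_{k-1}$ and $g_{k-1}$ does not depend on $x_k$. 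This is the duality route of \citet{Boucheron 2013}: its core is an elementary pointwise inequality and the normalization needed for the conditional entropies is handled transparently by homogeneity, at the price of introducing the exponential change of variable; the paper's route avoids the dual formulation and needs only one application of Jensen per coordinate, which is marginally shorter. Your integrability remarks (treat $g$ bounded above and below away from zero, then truncate) are appropriate and in any case cover the only situation the paper uses, $g=e^{\beta F}$ with $F\in L_{\infty}\left[\mu\right]$.
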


To prove this we use the following lemma.

\begin{lemma}
\label{Lemma Convexity of KL divergence}Let $h,g>0$ be bounded measurable
functions on $\Omega $. Then for any expectation $\mathbb{E}$%
\begin{equation*}
\mathbb{E}\left[ h\right] \ln \frac{\mathbb{E}\left[ h\right] }{\mathbb{E}%
\left[ g\right] }\leq \mathbb{E}\left[ h\ln \frac{h}{g}\right] .
\end{equation*}
\end{lemma}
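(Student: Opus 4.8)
The plan is to reduce the statement to the elementary one-variable inequality
\begin{equation*}
a\ln\frac{a}{b}\ \geq\ a-b,\qquad a,b>0,
\end{equation*}
which is just the tangent-line bound at $x=1$ for the convex function $x\mapsto x\ln x$: the function $\psi(x)=x\ln x-x+1$ has $\psi'(x)=\ln x$, so it is minimized at $x=1$ with $\psi(1)=0$, hence $\psi\geq 0$ on $(0,\infty)$; evaluating at $x=a/b$ and multiplying through by $b>0$ gives the displayed bound. Boundedness of $h$ and $g$ ensures that the integrals appearing below are well defined with values in $(-\infty,+\infty]$, so the asserted inequality makes sense (and is trivially true whenever its right-hand side is $+\infty$); I would remark on this but not dwell on it, since in the intended application $h$ and $g$ are bounded away from $0$.

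Next I would exploit a scaling invariance in $g$. Replacing $g$ by $\lambda g$ for a positive constant $\lambda$ changes the left-hand side $\mathbb{E}\left[h\right]\ln\bigl(\mathbb{E}\left[h\right]/\mathbb{E}\left[g\right]\bigr)$ by $-\mathbb{E}\left[h\right]\ln\lambda$, and it changes the right-hand side $\mathbb{E}\left[h\ln(h/g)\right]$ by $-\mathbb{E}\left[h\right]\ln\lambda$ as well; only linearity of $\mathbb{E}$ is used here, which matters because in the application $\mathbb{E}$ will be a conditional expectation $\mathbb{E}_k$. Since $h,g>0$ forces $\mathbb{E}\left[h\right],\mathbb{E}\left[g\right]>0$, I may choose $\lambda=\mathbb{E}\left[h\right]/\mathbb{E}\left[g\right]$, so that after this rescaling $\mathbb{E}\left[g\right]=\mathbb{E}\left[h\right]$ and the left-hand side collapses to $\mathbb{E}\left[h\right]\ln 1=0$.

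Finally I would close the argument by integrating the pointwise bound. With $\mathbb{E}\left[g\right]=\mathbb{E}\left[h\right]$, apply $a\ln(a/b)\geq a-b$ pointwise with $a=h$, $b=g$ and take expectations:
\begin{equation*}
\mathbb{E}\left[h\ln\frac{h}{g}\right]\ \geq\ \mathbb{E}\left[h\right]-\mathbb{E}\left[g\right]\ =\ 0\ =\ \mathbb{E}\left[h\right]\ln\frac{\mathbb{E}\left[h\right]}{\mathbb{E}\left[g\right]},
\end{equation*}
and then undo the rescaling (legitimate by the invariance established above) to obtain the claim for the original $g$. There is no serious obstacle here: the statement is essentially Gibbs' inequality / non-negativity of relative entropy, and the only points needing a little care are the integrability issue (dispatched by the boundedness hypothesis) and making sure the normalization step invokes nothing beyond linearity of the expectation.
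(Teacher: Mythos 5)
Your proof is correct, and it takes a genuinely different route from the paper's. The paper proves the lemma as a one-line application of Jensen's inequality under a change of measure: it introduces the tilted expectation $\mathbb{E}_{g}\left[ h\right] =\mathbb{E}\left[ gh\right] /\mathbb{E}\left[ g\right] $ and applies convexity of $\Phi \left( t\right) =t\ln t$ to get $\mathbb{E}\left[ h\right] \ln \left( \mathbb{E}\left[ h\right] /\mathbb{E}\left[ g\right] \right) =\mathbb{E}\left[ g\right] \Phi \left( \mathbb{E}_{g}\left[ h/g\right] \right) \leq \mathbb{E}\left[ g\right] \mathbb{E}_{g}\left[ \Phi \left( h/g\right) \right] =\mathbb{E}\left[ h\ln \left( h/g\right) \right] $. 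You instead use only the tangent-line bound $a\ln \left( a/b\right) \geq a-b$ pointwise, together with the observation that both sides of the claimed inequality shift by the same amount $-\mathbb{E}\left[ h\right] \ln \lambda $ when $g$ is rescaled, so one may normalize $\mathbb{E}\left[ g\right] =\mathbb{E}\left[ h\right] $ and reduce the statement to $\mathbb{E}\left[ h\ln \left( h/g\right) \right] \geq \mathbb{E}\left[ h\right] -\mathbb{E}\left[ g\right] =0$. Both arguments ultimately rest on convexity of $t\ln t$, but yours is more elementary (no auxiliary reweighted expectation, only a scalar inequality plus linearity and monotonicity of $\mathbb{E}$) and it makes explicit that nothing beyond linearity of the expectation is used, which is indeed the relevant point when the lemma is later applied with conditional expectations in the tensorization theorem; the paper's argument is more compact and exhibits the statement directly as Jensen's inequality for the measure with density proportional to $g$, which is the standard ``relative entropy'' viewpoint. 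Your remarks on integrability (the right-hand side is well defined in $\left( -\infty ,+\infty \right] $ because $h\ln h$ and $h\ln g$ are bounded below for bounded $h,g>0$, and the inequality is trivial when it equals $+\infty $) are accurate and slightly more careful than the paper, which leaves this implicit.
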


\begin{proof}
Define an expectation functional $\mathbb{E}_{g}$ by $\mathbb{E}_{g}\left[ h%
\right] =\mathbb{E}\left[ gh\right] /\mathbb{E}\left[ g\right] $. The
function $\Phi \left( t\right) =t\ln t$ is convex for positive $t$, since $%
\Phi ^{\prime \prime }=1/t>0$. Thus, by Jensen's inequality,%
\begin{equation*}
\mathbb{E}\left[ h\right] \ln \frac{\mathbb{E}\left[ h\right] }{\mathbb{E}%
\left[ g\right] }=\mathbb{E}\left[ g\right] \Phi \left( \mathbb{E}_{g}\left[ 
\frac{h}{g}\right] \right) \leq \mathbb{E}\left[ g\right] \mathbb{E}_{g}%
\left[ \Phi \left( \frac{h}{g}\right) \right] =\mathbb{E}\left[ h\ln \frac{h%
}{g}\right] .
\end{equation*}
\end{proof}

\begin{proof}{\bf of Theorem \protect\ref{Lemma Tensorization Inequality}.} Write $g/\mathbb{E}\left[ g\right] $ as a telescopic product and use the
previous lemma to get%
\begin{eqnarray*}
\mathbb{E}\left[ g\ln \frac{g}{\mathbb{E}\left[ g\right] }\right]  &=&%
\mathbb{E}\left[ g\ln \prod_{k=1}^{n}\frac{\mathbb{E}_{1}\dots\mathbb{E}_{k-1}%
\left[ g\right] }{\mathbb{E}_{1}\dots\mathbb{E}_{k-1}\mathbb{E}_{k}\left[ g%
\right] }\right]  \\
&=&\sum_{k}\mathbb{E}\left[ \mathbb{E}_{1}\dots\mathbb{E}_{k-1}\left[ g\right]
\ln \frac{\mathbb{E}_{1}\dots\mathbb{E}_{k-1}\left[ g\right] }{\mathbb{E}%
_{1}\dots\mathbb{E}_{k-1}\left[ \mathbb{E}_{k}\left[ g\right] \right] }\right] 
\\
&\leq &\sum_{k}\mathbb{E}\left[ g\ln \frac{g}{\mathbb{E}_{k}\left[ g\right] }%
\right] =\mathbb{E}\left[ \sum_{k}\mathbb{E}_{k}\left[ g\ln \frac{g}{\mathbb{%
E}_{k}\left[ g\right] }\right] \right].
\end{eqnarray*}
\end{proof}

Fix some $F\in L_{\infty }\left[ \mu \right] $. For any real $\beta $ and $%
g\in L_{\infty }\left[ \mu \right] $ define the thermal expectation $\mathbb{%
E}_{\beta F}\left[ g\right] $\ and for $1\leq k\leq n$ the conditional
thermal expectation $\mathbb{E}_{k,\beta F}\left[ g\right] $ by%
\begin{equation*}
\mathbb{E}_{\beta F}\left[ g\right] =\frac{\mathbb{E}\left[ ge^{\beta F}%
\right] }{\mathbb{E}\left[ e^{\beta F}\right] }\text{ and }\mathbb{E}%
_{k,\beta F}\left[ g\right] =\frac{\mathbb{E}_{k}\left[ ge^{\beta F}\right] 
}{\mathbb{E}_{k}\left[ e^{\beta F}\right] }.
\end{equation*}%
Also let $\sigma _{\beta F}^{2}\left[ g\right] $ and $\sigma _{k,\beta F}^{2}%
\left[ g\right] $ be the corresponding variances%
\begin{equation*}
\sigma _{\beta F}^{2}\left[ g\right] =\mathbb{E}_{\beta F}\left[ g^{2}\right]
-\left( \mathbb{E}_{\beta F}\left[ g\right] \right) ^{2}\text{ and }\sigma
_{k,\beta F}^{2}\left[ g\right] =\mathbb{E}_{k,\beta F}\left[ g^{2}\right]
-\left( \mathbb{E}_{k,\beta F}\left[ g\right] \right) ^{2}.
\end{equation*}%
Note that $\mathbb{E}_{k,\beta F}\left[ g\right] $ and $\sigma _{k,\beta
F}^{2}\left[ g\right] $ depend on $\mathbf{x}$ but not on of $x_{k}$. Also $%
\mathbb{E}_{k,\beta F}\left[ g\right] =\mathbb{E}_{k,\beta \left( F+h\right)
}\left[ g\right] $ for any function $h$ which does not depend on $x_{k}$.
The Helmholtz free energy and its conditional counterpart are for $\beta \neq
0$%
\begin{equation*}
H\left( \beta \right) =\frac{1}{\beta }\ln \mathbb{E}\left[ e^{\beta F}%
\right] \text{ and }H_{k}\left( \beta \right) =\frac{1}{\beta }\ln \mathbb{E}%
_{k}\left[ e^{\beta F}\right] \text{.}
\end{equation*}%
Here we omit the dependence on $F$. Note that $\lim_{\beta \rightarrow
0}H\left( \beta \right) =\mathbb{E}\left[ F\right] $.

\begin{lemma}
\label{Lemma double integral}We have 
\begin{equation*}
H^{\prime }\left( \beta \right) =\frac{1}{\beta ^{2}}\int_{0}^{\beta
}\int_{t}^{\beta }\sigma _{sF}^{2}\left[ F\right] dsdt,~~\text{\rm and }~%
H_{k}^{\prime }\left( \beta \right) =\frac{1}{\beta ^{2}}\int_{0}^{\beta
}\int_{t}^{\beta }\sigma _{k,sF}^{2}\left[ F\right] dsdt
\end{equation*}
\end{lemma}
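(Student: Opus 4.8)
The plan is to reduce the statement to elementary calculus for the cumulant generating function $\psi(\beta):=\ln\mathbb{E}\left[e^{\beta F}\right]$, in terms of which $H(\beta)=\psi(\beta)/\beta$. Since $F\in L_\infty[\mu]$, the integrand $e^{\beta F}$ and all of its $\beta$-derivatives are bounded, so dominated convergence justifies differentiating under the integral sign and $\mathbb{E}\left[e^{\beta F}\right]>0$ for every $\beta$; hence $\psi$ is smooth and
$$
\psi'(\beta)=\frac{\mathbb{E}\left[Fe^{\beta F}\right]}{\mathbb{E}\left[e^{\beta F}\right]}=\mathbb{E}_{\beta F}[F],
\qquad
\psi''(\beta)=\mathbb{E}_{\beta F}\left[F^2\right]-\left(\mathbb{E}_{\beta F}[F]\right)^2=\sigma_{\beta F}^2[F],
$$
while $\psi(0)=0$.

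First I would apply the quotient rule to $H(\beta)=\psi(\beta)/\beta$ to obtain, for $\beta\neq 0$,
$$
H'(\beta)=\frac{\beta\,\psi'(\beta)-\psi(\beta)}{\beta^2}.
$$
Then I would identify the numerator with the claimed double integral by two applications of the fundamental theorem of calculus: $\int_t^\beta\psi''(s)\,ds=\psi'(\beta)-\psi'(t)$, and therefore
$$
\int_0^\beta\!\!\int_t^\beta\psi''(s)\,ds\,dt=\int_0^\beta\left(\psi'(\beta)-\psi'(t)\right)dt=\beta\,\psi'(\beta)-\bigl(\psi(\beta)-\psi(0)\bigr)=\beta\,\psi'(\beta)-\psi(\beta),
$$
which holds for either sign of $\beta$. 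Substituting $\psi''(s)=\sigma_{sF}^2[F]$ gives the first identity.

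For the conditional assertion the argument is verbatim the same with $\mathbb{E}$ replaced throughout by $\mathbb{E}_k$: putting $\psi_k(\beta)=\ln\mathbb{E}_k\left[e^{\beta F}\right]$ one has $H_k(\beta)=\psi_k(\beta)/\beta$, and since $\mathbb{E}_k$ is an ordinary integration in $x_k$ alone (not involving $\beta$) the same interchange of $\partial_\beta$ with the integral yields $\psi_k'(\beta)=\mathbb{E}_{k,\beta F}[F]$ and $\psi_k''(\beta)=\sigma_{k,\beta F}^2[F]$; the same two integrations then give the stated formula, all equalities being understood $\mu$-a.e., since both sides are functions of $\mathbf{x}$ but not of $x_k$. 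The only point needing a word of care --- the ``hard part'', such as it is --- is the interchange of differentiation and integration together with the positivity of $\mathbb{E}\left[e^{\beta F}\right]$, both immediate from $\|F\|_\infty<\infty$; one may also note that both sides extend continuously to $\beta=0$, matching $\lim_{\beta\to0}H(\beta)=\mathbb{E}[F]$, though this is not needed in the sequel.
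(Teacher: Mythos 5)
Your proof is correct and follows essentially the same route as the paper: both identify $A(\beta)=\ln\mathbb{E}\left[ e^{\beta F}\right]$ (your $\psi$) with $A^{\prime}(\beta)=\mathbb{E}_{\beta F}\left[ F\right]$, $A^{\prime\prime}(\beta)=\sigma_{\beta F}^{2}\left[ F\right]$, and then recover $\beta A^{\prime}(\beta)-A(\beta)=\int_{0}^{\beta}\int_{t}^{\beta}A^{\prime\prime}(s)\,ds\,dt$ via the fundamental theorem of calculus, the conditional case being verbatim the same with $\mathbb{E}_{k}$. The only cosmetic difference is that you compute the double integral directly as the numerator from the quotient rule, whereas the paper first expands $A^{\prime}(\beta)$ and $A(\beta)$ separately and then substitutes into $H^{\prime}(\beta)=A^{\prime}(\beta)/\beta-A(\beta)/\beta^{2}$.
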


\begin{proof}
Define a function $A$ by $A\left( \beta \right) =\ln \mathbb{E}\left[
e^{\beta F}\right] $. Then $A\left( 0\right) =0$. It is easy to verify that $%
A^{\prime }\left( \beta \right) =\mathbb{E}_{\beta F}\left[ F\right] $ and $%
A^{\prime \prime }\left( \beta \right) =\sigma _{\beta F}^{2}\left[ F\right] 
$. Thus%
\begin{equation*}
\mathbb{E}_{\beta F}\left[ F\right] =A^{\prime }\left( \beta \right)
=A^{\prime }\left( 0\right) +\int_{0}^{\beta }A^{\prime \prime }\left(
t\right) dt=\mathbb{E}\left[ F\right] +\frac{1}{\beta }\int_{0}^{\beta
}\int_{0}^{\beta }\sigma _{sF}^{2}\left[ F\right] dsdt
\end{equation*}%
and%
\begin{eqnarray*}
\ln \mathbb{E}\left[ e^{\beta F}\right]  &=&A\left( \beta \right)
=\int_{0}^{\beta }A^{\prime }\left( t\right) dt=\int_{0}^{\beta }\left(
A^{\prime }\left( 0\right) +\int_{0}^{t}A^{\prime \prime }\left( s\right)
ds\right) dt \\
&=&\beta \mathbb{E}\left[ F\right] +\int_{0}^{\beta }\int_{0}^{t}\sigma
_{sF}^{2}\left[ F\right] dsdt.
\end{eqnarray*}%
Thus%
\begin{equation*}
H^{\prime }\left( \beta \right) =\frac{1}{\beta }\mathbb{E}_{\beta F}\left[ F%
\right] -\frac{1}{\beta ^{2}}\ln \mathbb{E}\left[ e^{\beta F}\right] =\frac{1%
}{\beta ^{2}}\int_{0}^{\beta }\int_{t}^{\beta }\sigma _{sF}^{2}\left[ F%
\right] dsdt,
\end{equation*}%
which gives the first equation. The proof of the second is completely
analogous. 
\end{proof}

We now give a general concentration result \citep[see e.g.][]{Maurer 2012}.

\begin{theorem}
\label{Theorem general concentration}For any $\beta >0$ we have the entropy
bound%
\begin{equation}
H^{\prime }\left( \beta \right) \leq \frac{1}{\beta ^{2}}\mathbb{E}_{\beta F}%
\left[ \sum_{k=1}^{n}\int_{0}^{\beta }\int_{t}^{\beta }\sigma _{k,sF}^{2}%
\left[ F\right] dsdt\right]   \label{General Entropy Bound}
\end{equation}%
and with $t>0$ the concentration inequality%
\begin{equation}
\Pr \left\{ F-\mathbb{E}F>t\right\} \leq \exp \left( \beta \int_{0}^{\beta
}H^{\prime }\left( \gamma \right) d\gamma -\beta t\right) .
\label{General Concentration Bound}
\end{equation}
\end{theorem}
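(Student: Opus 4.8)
The plan is to obtain the entropy bound (\ref{General Entropy Bound}) from the subadditivity of entropy, Theorem \ref{Lemma Tensorization Inequality}, and then to read off the concentration inequality (\ref{General Concentration Bound}) from the exponential Markov inequality together with the free-energy identities established in the proof of Lemma \ref{Lemma double integral}.

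For the entropy bound I would apply Theorem \ref{Lemma Tensorization Inequality} to the positive function $g=e^{\beta F}$, so that $g\ln g=\beta F e^{\beta F}$ and, by definition of the free energies, $\ln\mathbb{E}[e^{\beta F}]=\beta H(\beta)$ and $\ln\mathbb{E}_k[e^{\beta F}]=\beta H_k(\beta)$. Its left-hand side is then $\beta\mathbb{E}[Fe^{\beta F}]-\beta H(\beta)\,\mathbb{E}[e^{\beta F}]$, and each summand on the right factors as $\mathbb{E}_k[e^{\beta F}]\big(\beta\mathbb{E}_{k,\beta F}[F]-\beta H_k(\beta)\big)$. Using the formula for $H'$ from the proof of Lemma \ref{Lemma double integral} — namely $\beta\mathbb{E}_{\beta F}[F]-\beta H(\beta)=\beta^2 H'(\beta)$ and its conditional analogue $\beta\mathbb{E}_{k,\beta F}[F]-\beta H_k(\beta)=\beta^2 H_k'(\beta)$ — and dividing the whole inequality by $\mathbb{E}[e^{\beta F}]$, I obtain $\beta^2 H'(\beta)\le\beta^2\,\mathbb{E}[e^{\beta F}]^{-1}\,\mathbb{E}\big[\sum_{k=1}^n\mathbb{E}_k[e^{\beta F}]\,H_k'(\beta)\big]$.

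To close the first part I would use that $H_k'(\beta)$ depends on $\mathbf{x}$ but not on $x_k$, so it may be pulled inside $\mathbb{E}_k[\cdot]$, and the tower property $\mathbb{E}[\mathbb{E}_k[\cdot]]=\mathbb{E}[\cdot]$ then collapses the right-hand side to $\beta^2\,\mathbb{E}_{\beta F}\big[\sum_k H_k'(\beta)\big]$; substituting the double-integral representation of $H_k'(\beta)$ from Lemma \ref{Lemma double integral} and cancelling $\beta^2$ yields (\ref{General Entropy Bound}). For the concentration inequality (\ref{General Concentration Bound}) I would apply the exponential Markov inequality $\Pr\{F-\mathbb{E}F>t\}\le e^{-\beta t}\,\mathbb{E}[e^{\beta(F-\mathbb{E}F)}]$; since $\ln\mathbb{E}[e^{\beta F}]=\beta H(\beta)$ and $\lim_{\gamma\to 0}H(\gamma)=\mathbb{E}F$, the fundamental theorem of calculus gives $\ln\mathbb{E}[e^{\beta(F-\mathbb{E}F)}]=\beta\big(H(\beta)-\mathbb{E}F\big)=\beta\int_0^\beta H'(\gamma)\,d\gamma$, and combining the two bounds gives the claim.

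The delicate step is the bookkeeping in the two preceding paragraphs: identifying both sides of (\ref{Tensorization Inequality}) with $\beta^2 H'$ and $\beta^2 H_k'$ through the free-energy identities, and then handling the conditional thermal expectation on the right, where it is precisely the $x_k$-independence of $H_k'(\beta)$ that lets the nested expectations telescope into the single thermal expectation $\mathbb{E}_{\beta F}$. Everything else is routine calculus.
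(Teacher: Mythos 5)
Your proposal is correct and takes essentially the same route as the paper: substitute $g=e^{\beta F}$ into the tensorization inequality, identify both sides through the free-energy identities $\beta^{2}H^{\prime}(\beta)=\beta\mathbb{E}_{\beta F}[F]-\ln\mathbb{E}[e^{\beta F}]$ and its conditional analogue from Lemma \ref{Lemma double integral} (your pulling $H_{k}^{\prime}(\beta)$ inside $\mathbb{E}_{k}$ and using the tower property is exactly the step the paper performs implicitly in passing to $\mathbb{E}_{\beta F}[\sum_{k}H_{k}^{\prime}(\beta)]$), and then derive the tail bound by integrating $H^{\prime}$ from $0$ to $\beta$ and applying the exponential Markov inequality. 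No gaps; your observation that the moment-generating-function identity is in fact an equality is a minor sharpening of the paper's displayed inequality.
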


\begin{proof}
Substituting $g=e^{\beta F}$ in (\ref{Tensorization Inequality}), dividing
by $\beta ^{2}\mathbb{E}\left[ e^{\beta F}\right] $ and using Lemma \ref%
{Lemma double integral} we arrive at%
\begin{eqnarray*}
H^{\prime }\left( \beta \right)  &\leq &\mathbb{E}_{\beta F}\left[
\sum_{k}\left( \frac{1}{\beta }\mathbb{E}_{k,\beta F}\left[ F\right] -\frac{1%
}{\beta ^{2}}\ln \mathbb{E}_{k}\left[ e^{\beta F}\right] \right) \right] =%
\mathbb{E}_{\beta F}\left[ \sum_{k}H_{k}^{\prime }\left( \beta \right) %
\right] . \\
&=&\frac{1}{\beta ^{2}}\mathbb{E}_{\beta F}\left[ \sum_{k}\int_{0}^{\beta
}\int_{t}^{\beta }\sigma _{k,sF}^{2}\left[ F\right] dsdt\right] ,
\end{eqnarray*}%
which is the first conclusion. Integrating $H^{\prime }$ from $0$ to $\beta $%
, using $\lim_{\beta \rightarrow 0}H\left( \beta \right) =\mathbb{E}\left[ F%
\right] $, and multiplying with $\beta $ gives%
\begin{equation*}
\ln \mathbb{E}\left[ e^{\beta F}\right] \leq \beta \mathbb{E}\left[ F\right]
+\beta \int_{0}^{\beta }H^{\prime }\left( \gamma \right) d\gamma .
\end{equation*}%
Subtract $\beta \left( \mathbb{E}\left[ F\right] -t\right) $ and take the
exponential to get%
\begin{equation*}
\mathbb{E}\left[ e^{\beta \left( F-\mathbb{E}\left[ F\right] -t\right) }%
\right] \leq \exp \left( \beta \int_{0}^{\beta }H^{\prime }\left( \gamma
\right) d\gamma -\beta t\right) .
\end{equation*}%
The second conclusion then follows from Markov's inequality.
\end{proof}

\subsection{Proofs of Theorems \protect\ref{Theorem Concentration} and 
\protect\ref{Theorem Gaussian Lipschitz}}

With Theorem \ref{Theorem general concentration} at hand we can prove a
number of concentration inequalities if we manage to bound the right hand
side in (\ref{General Entropy Bound}). We then substitute in the second
conclusion and optimize over $\beta $. At first the expression with the
double integral and the thermal variances looks very cumbersome, but, as we
shall see, it can often be bounded by comparatively simple methods. The
bounded difference inequality, Theorem \ref{Theorem Concentration} (i) is
obtained very easily, the proof of Theorem \ref{Theorem Concentration} (ii)
is slightly more tricky.

\vspace{.3truecm}
\begin{proof}{\bf of Theorem \protect\ref{Theorem Concentration}} 
We prove (i). For fixed $\mathbf{x}$ the thermal variance $\sigma _{k,sF}^{2}%
\left[ F\right] $ is the variance of a function with values in the interval $%
\left[ \inf_{k}F,\sup_{k}F\right] $, so that 
\begin{equation*}
\sigma _{k,sF}^{2}\left[ F\right] \leq \frac{1}{4}\left(
\sup_{k}F-\inf_{k}F\right) ^{2}.
\end{equation*}%
The double integral then just gives a factor of $\beta ^{2}/2$. Now sum over 
$k$ and bound the expectation $\mathbb{E}_{\beta F}$ by the $\left\Vert
.\right\Vert _{\infty }$-norm to obtain%
\begin{equation*}
H^{\prime }\left( \beta \right) \leq \frac{1}{8}\left\Vert \sum_{k}\left(
\sup_{k}F-\inf_{k}F\right) ^{2}\right\Vert _{\infty }=\frac{A^{2}\left(
F\right) }{8}.
\end{equation*}%
(\ref{General Concentration Bound}) then gives%
\begin{equation*}
\Pr \left\{ F-\mathbb{E}F>t\right\} \leq \exp \left( \frac{\beta ^{2}}{8}%
A^{2}\left( F\right) -\beta t\right) 
\end{equation*}%
and substitution of $\beta =4t/A^{2}\left( F\right) $ gives the result.

To prove part (ii) first note that for any expectation and any real function 
$g$ we have $\sigma ^{2}\left[ g\right] =\min_{t\in 
\mathbb{R}
}\mathbb{E}\left[ \left( g-t\right) ^{2}\right] \leq \mathbb{E}\left[ \left(
g-\inf g\right) ^{2}\right] $. Applied to the conditional thermal variance
this translates to 
\begin{equation}
\sigma _{k,\beta F}^{2}\left[ F\right] \leq \mathbb{E}_{k,\beta F}\left[
\left( F-\inf_{k}F\right) ^{2}\right] \text{.}
\label{Trivial variance bound}
\end{equation}%
We now claim that the right hand side above is a nondecreasing function of $%
\beta $. Too see this write $h=F-\inf_{k}F$ and define a real function $\xi $
by $\xi \left( t\right) =\left( \max \left\{ t,0\right\} \right) ^{2}$.
Since $h\geq 0$ we have 
\begin{equation*}
\mathbb{E}_{k,\beta F}\left[ \left( F-\inf_{k}F\right) ^{2}\right] =\mathbb{E%
}_{k,\beta \left( F-\inf_{k}F\right) }\left[ \left( F-\inf_{k}F\right) ^{2}%
\right] =\mathbb{E}_{k,\beta h}\left[ \xi \left( h\right) \right] .
\end{equation*}%
Here we used $\mathbb{E}_{k,\beta \left( F+h\right) }=\mathbb{E}_{k,\beta F}$
whenever $g$ is independent of $x_{k}$. A straighforward computation shows%
\begin{equation*}
\frac{d}{d\beta }\mathbb{E}_{\beta h}\left[ \xi \left( h\right) \right] =%
\mathbb{E}_{\beta h}\left[ \xi \left( h\right) h\right] -\mathbb{E}_{\beta h}%
\left[ \xi \left( h\right) \right] \mathbb{E}_{\beta h}\left[ h\right] \geq
0,
\end{equation*}%
where the last inequality uses the well known fact that for any expectation $%
\mathbb{E}\left[ \xi \left( h\right) h\right] \geq \mathbb{E}\left[ \xi
\left( h\right) \right] \mathbb{E}\left[ h\right] $ whenever $\xi $ is a
nondecreasing function. This establishes the claim.

Together with (\ref{Trivial variance bound}) this implies that for $0\leq
s\leq \beta $ we have%
\begin{equation*}
\sigma _{k,sF}^{2}\left[ F\right] \leq \mathbb{E}_{k,sF}\left[ \left(
F-\inf_{k}F\right) ^{2}\right] \leq \mathbb{E}_{k,\beta F}\left[ \left(
F-\inf_{k}F\right) ^{2}\right] ,
\end{equation*}%
so, using Theorem \ref{Theorem general concentration} again,%
\begin{eqnarray*}
H^{\prime }\left( \beta \right)  &\leq &\frac{1}{\beta ^{2}}\mathbb{E}%
_{\beta F}\left[ \sum_{k=1}^{n}\int_{0}^{\beta }\int_{t}^{\beta }\sigma
_{k,sF}^{2}\left[ F\right] ds~dt\right] \leq \frac{1}{2}\mathbb{E}_{\beta F}%
\left[ \sum_{k=1}^{n}\mathbb{E}_{k,\beta F}\left( F-\inf_{k}F\right) ^{2}%
\right]  \\
&=&\frac{1}{2}\mathbb{E}_{\beta F}\left[ \sum_{k=1}^{n}\left(
F-\inf_{k}F\right) ^{2}\right] \leq \frac{B^{2}\left( F\right) }{2},
\end{eqnarray*}%
where we used the identity $\mathbb{E}_{\beta F}\mathbb{E}_{k,\beta F}=%
\mathbb{E}_{\beta F}$. Then (\ref{General Concentration Bound}) gives 
\begin{equation*}
\Pr \left\{ F-\mathbb{E}F>t\right\} \leq \exp \left( \frac{\beta
^{2}B^{2}\left( F\right) }{2}-\beta t\right) 
\end{equation*}%
and substitution of $\beta =t/B^{2}\left( F\right) $ gives the
result. 
\end{proof}

Finally we use the bounded difference inequality, Theorem \ref{Theorem
Concentration} (i), to prove the Gaussian Concentration inequality.

\vspace{.3truecm}
\begin{proof}{\bf of Theorem \protect\ref{Theorem Gaussian Lipschitz}.} 
By an easy approximation argument using convolution with Gaussian kernels of
decreasing width it suffices to prove the result if the function $F$ is in $%
C^{\infty }$ with $\left\vert \left( \partial ^{2}/x_{i}^{2}\right) F\left( 
\mathbf{x}\right) \right\vert \leq B$ for all $\mathbf{x\in 
\mathbb{R}
}^{n}$ and $i\in \left\{ 1,\dots,n\right\} $, where $B$ is a finite, but
potentially very large, constant. For $K\in 
\mathbb{N}
$ let $X_{i}^{\left( K\right) }$ be the random variable%
\begin{equation*}
X_{i}^{\left( K\right) }=\frac{1}{\sqrt{K}}\sum_{k=1}^{K}\epsilon _{ik},
\end{equation*}%
where the $\epsilon _{ik}$ are independent Rademacher variables, and define
the random vector $\mathbf{X}^{\left( K\right) }$ accordingly. We write $%
G\left( \mathbf{\epsilon }\right) =F\left( \mathbf{X}^{\left( K\right)
}\right) $ and set about to apply Theorem \ref{Theorem Concentration} (i) to
the random variable $G\left( \mathbf{\epsilon }\right) $ by bounding the
variation in the epsilon components. 

Fix a configuration $\mathbf{\epsilon }$ with corresponding vector $\mathbf{X%
}^{\left( K\right) }$. For each $i\in \left\{ 1,\dots,n\right\} $ we introduce
the real function $F_{i}\left( x\right) =F\left( \mathbf{X}_{i\leftarrow
x}^{\left( K\right) }\right) $. Since $F$ is $C^{\infty }$ we have for any $%
t\in 
\mathbb{R}
$
\begin{equation*}
F_{i}\left( x+t\right) -F_{i}\left( x\right) =tF_{i}^{\prime }\left(
x\right) +\frac{t^{2}}{2}F_{i}^{\prime \prime }\left( s\right) 
\end{equation*}
for some $s\in 
\mathbb{R}
$, and by the Lipschitz condition and the bound on $\left\vert F_{i}^{\prime
\prime }\right\vert $ 
\begin{eqnarray*}
\left( F_{i}\left( x+t\right) -F_{i}\left( x\right) \right) ^{2}
&=&t^{2}\left( F_{i}^{\prime }\left( x\right) \right)
^{2}+t^{3}F_{i}^{\prime }\left( x\right) F_{i}^{\prime \prime }\left(
s\right) +\frac{t^{4}}{4}\left( F_{i}^{\prime \prime }\left( s\right)
\right) ^{2} \\
&\leq &t^{2}\left( F_{i}^{\prime }\left( x\right) \right) ^{2}+\left\vert
t\right\vert ^{3}LB+\frac{t^{4}}{4}B^{2}.
\end{eqnarray*}

Now fix a pair of indices $\left( i,j\right) $ with $i\in \left\{
1,\dots,n\right\} $ and $k\in \left\{ 1,\dots,K\right\} $. Since $\epsilon _{ik}$
can only have two values, one of which must be $X_{i}^{\left( K\right) }$,
we must have%
\begin{align*}
\left( \sup_{y}G\left( \mathbf{\epsilon }_{\left( i,k\right) \leftarrow
y}\right) -\inf_{y}G\left( \mathbf{\epsilon }_{\left( i,k\right) \leftarrow
y}\right) \right) ^{2}& =\left( F_{i}\left( X_{i}^{\left( K\right) }\pm 
\frac{2}{\sqrt{K}}\right) -F_{i}\left( X_{i}^{\left( K\right) }\right)
\right) ^{2} \\
& \leq \frac{4\left( F_{i}^{\prime }\left( X_{i}^{K}\right) \right) ^{2}}{K}+%
\frac{8LB}{K^{3/2}}+\frac{4B^{2}}{K^{2}}.
\end{align*}%
Summing over $k$ and $i$ and then taking the supremum over $\mathbf{\epsilon 
}$ we obtain%
\begin{equation*}
A\left( G\right) ^{2}\leq 4L^{2}+\frac{8nLB}{K^{1/2}}+\frac{4nB^{2}}{K}.
\end{equation*}%
From Theorem \ref{Theorem Concentration} (i) and $F\left( \mathbf{X}^{\left(
K\right) }\right) =G\left( \mathbf{\epsilon }\right) $ we conclude that%
\begin{equation*}
\Pr \left\{ F\left( \mathbf{X}^{\left( K\right) }\right) -{{\mathbb{E}}}
F\left( \mathbf{X}^{\left( K\right) }\right) >s\right\} \leq \exp \left( 
\frac{-s^{2}}{2L^{2}+4nLB/K^{1/2}+2nB^{2}/K}\right) .
\end{equation*}%
The conclusion now follows from the central limit theorem since $\mathbf{X}%
^{\left( K\right) }\rightarrow \mathbf{X}$ weakly as $K\rightarrow \infty $.
 
\end{proof}

\end{document}